%% file: main.tex
\documentclass[english]{article}

\usepackage[margin=1in]{geometry}
\usepackage[T1]{fontenc}
\usepackage{lmodern}
\usepackage{microtype}
\usepackage[utf8]{inputenc}
\input{math_commands.tex}

\usepackage{amssymb,amsthm,dsfont}
\usepackage[table]{xcolor}
\usepackage{booktabs,multirow}
\usepackage{algorithm,algorithmic}
\renewcommand{\algorithmiccomment}[1]{\hfill\(\triangleright\)~#1}
\usepackage{pgfplots}
\pgfplotsset{compat=1.18}
\usetikzlibrary{arrows.meta,positioning,fit,calc}
\usepackage{placeins,graphicx}
\usepackage{fontawesome5}
\usepackage[authoryear,round]{natbib}
\usepackage{xurl}
\usepackage[unicode,hypertexnames=false,colorlinks=true,
  linkcolor=blue,citecolor=blue,urlcolor=blue]{hyperref}
\hypersetup{
  pdftitle={One Call Ahead: Packed Self-Speculative Decoding with Diffusion Drafting},
  pdfauthor={Yuxiao Chen and Changxiao Cai},
  pdfsubject={Packed self-speculative decoding for diffusion language models}
}
\usepackage[capitalize,noabbrev]{cleveref}

\definecolor{packedgreen}{HTML}{2E8B57}
\definecolor{linearblue}{HTML}{4C78A8}
\definecolor{quadorange}{HTML}{F58518}
\definecolor{bonusviolet}{HTML}{9467BD}
\newcommand{\method}{Packed-SS}
\newcommand{\linear}{Linear-SS}
\newcommand{\quadratic}{Quadratic-SS}
\newcommand{\setdlm}{SetDLM}
\newcommand{\masktok}{\textsc{Mask}}

\usepackage{aliascnt}
\newcommand{\sharedtheorem}[3]{%
  \newaliascnt{#1}{theorem}%
  \newtheorem{#1}[#1]{#2}%
  \aliascntresetthe{#1}%
  \crefname{#1}{#2}{#3}\Crefname{#1}{#2}{#3}}
\theoremstyle{plain}
\newtheorem{theorem}{Theorem}[section]
\sharedtheorem{proposition}{Proposition}{Propositions}
\sharedtheorem{lemma}{Lemma}{Lemmas}
\sharedtheorem{corollary}{Corollary}{Corollaries}
\theoremstyle{definition}
\sharedtheorem{assumption}{Assumption}{Assumptions}
\theoremstyle{remark}
\sharedtheorem{remark}{Remark}{Remarks}
\crefname{theorem}{Theorem}{Theorems}\Crefname{theorem}{Theorem}{Theorems}
\crefname{algorithm}{Algorithm}{Algorithms}\Crefname{algorithm}{Algorithm}{Algorithms}

\title{One Call Ahead: Packed Self-Speculative\\
  Decoding with Diffusion Drafting}
\author{Yuxiao Chen\thanks{\raggedright Department of Industrial and Operations Engineering,
  University of Michigan, Ann Arbor, USA;
  \mbox{Email: \href{mailto:yuxiaoc@umich.edu}{yuxiaoc@umich.edu},
  \href{mailto:cxcai@umich.edu}{cxcai@umich.edu}.}}
  \and Changxiao Cai\footnotemark[1]}
\begin{document}
\maketitle
\input{sections/abstract}
\tableofcontents

\input{sections/introduction}

\input{sections/background}

\input{sections/related_work}
\input{sections/method}

\input{sections/experiments}
\input{sections/conclusion}

\section*{Acknowledgements}
C.\ Cai is supported by the NSF CAREER Award CCF-2541600 and the NSF grant DMS-2515333.

\bibliographystyle{apalike}
\bibliography{references}

\clearpage
\appendix
\crefalias{section}{appendix}
\crefalias{subsection}{appendix}
\crefalias{subsubsection}{appendix}
\crefname{appendix}{Appendix}{Appendices}\Crefname{appendix}{Appendix}{Appendices}
\input{sections/appendix}
\end{document}

%% file: math_commands.tex
\usepackage{amsmath,amsfonts,bm}

\def\eqref#1{equation~\ref{#1}}

\def\1{\bm{1}}

\DeclareMathAlphabet{\mathsfit}{\encodingdefault}{\sfdefault}{m}{sl}
\SetMathAlphabet{\mathsfit}{bold}{\encodingdefault}{\sfdefault}{bx}{n}

\newcommand{\E}{\mathbb{E}}

%% file: sections/abstract.tex
\begin{abstract}
Autoregressive (AR) language models generate tokens sequentially, so generating long outputs is costly.
Speculative decoding mitigates this by using a lightweight drafter to propose multiple tokens for the target AR model to verify in parallel, while a diffusion-based drafter can produce an entire draft block in one forward pass.
In models with AR and diffusion modes, one network both drafts and verifies, but drafting still waits for verification, so the standard schedule needs two sequential model calls per iteration. Existing one-call schedules avoid the wait but incur input size quadratic in the speculative width.
Motivated by this efficiency bottleneck, we introduce Packed-SS, a self-speculative decoding schedule that verifies and drafts in a single model call with input size linear in the speculative width, while preserving the target model's distribution.
The key idea is to let the diffusion drafter run \emph{one call ahead}: while verifying the current draft, the model simultaneously constructs the next draft from an older verified prefix and keeps its unresolved suffix for the next iteration.
Under a mild monotonicity assumption on the drafter, we further show that Packed-SS requires no more model calls than the standard two-call decoding in the greedy setting.
Experiments on Nemotron 3B/8B and SetDLM 1.7B show higher throughput and lower mean wall time than both baselines under greedy decoding.
\end{abstract}

\begin{center}
\footnotesize
\mbox{\faGithub\enspace\href{https://github.com/cyx78/One-Call-Ahead-Packed-Self-Speculative-Decoding-with-Diffusion-Drafting}{\nolinkurl{https://github.com/cyx78/One-Call-Ahead-Packed-Self-Speculative-Decoding-with-Diffusion-Drafting}}}
\end{center}

%% file: sections/introduction.tex
\providecommand{\nfe}{\mathrm{NFE}}
\providecommand{\indicator}{\mathbb{I}}
\providecommand{\E}{\mathbb{E}}

\section{Introduction}
\label{sec:introduction}

Modern language models (LMs) increasingly generate long sequences of intermediate reasoning before producing a final answer, making inference latency a central bottleneck. 
Standard autoregressive (AR) models \citep{brown2020gpt3} generate one token at a time from left to right. Producing an $L$-length sequence therefore requires at least $L$ sequential model calls, where the computation for each new token depends on all previously generated tokens.

Speculative decoding \citep{leviathan2023speculative,chen2023speculative,xia2023specdec} accelerates AR inference by using a lightweight \textit{draft model} that proposes multiple tokens for the full-size \textit{target model} to verify \textit{in parallel}. A rejection-sampling correction step ensures that the resulting sequence follows exactly the target model's distribution.
The target model can thus output several tokens from a single verification call. However, when the drafter is AR, creating a draft block remains sequential.
Therefore, speculative decoding shortens the expensive serial chain, but does not eliminate it.

Masked diffusion language models (MDLMs) provide a natural way to parallelize drafting \citep{sahoo2024mdlm,shi2024md4,gong2025diffullama,nie2025llada,ye2025dream}.
By learning to reverse a corruption process that transforms data into noise, an MDLM can predict multiple tokens simultaneously.
Therefore, when used as a speculative drafter, it can draft an entire block in one forward pass \citep{christopher2024specdiff,li2025diffuspec,chen2026dflash,zhang2026dflow}.
Some LMs support both causal AR prediction and parallel masked diffusion prediction in the same network \citep{liu2025tidar,fu2026nemotron}.
They provide a natural form of self-speculative decoding, with the drafter and verifier sharing the same model weights.

However, shared weights do not eliminate the sequential dependency between drafting and verification.
The standard self-speculative schedule, referred to as \textit{\linear{}}, uses two sequential calls: one AR call verifies the current draft and one diffusion call constructs the next draft from the newly verified prefix.
Drafting cannot begin until verification finishes to determine where the verified prefix ends.

\textit{\quadratic{}} \citep{liu2025tidar} eliminates this dependency by preparing a continuation for every possible verification outcome.  
With speculative width $W$, it pre-drafts separately for the cases in which $0,1,\ldots,W$ draft tokens are accepted.
Whichever outcome occurs, the next draft is already available.
This reduces each iteration to one call, but the number of evaluated positions grows quadratically with $W$.
Thus, \linear{} evaluates \(O(W)\) positions in two calls, whereas \quadratic{} evaluates \(O(W^2)\) in one.

At small batch sizes, a substantial fraction of a call's latency can come from loading model weights from memory, making it attractive to evaluate more positions per call \citep{cai2024medusa,fu2024lookahead}. 
To illustrate this tradeoff,  
suppose a call evaluating \(n\) positions costs \(\tau+\delta n\), where \(\tau\) is a fixed overhead, mostly weight loading, and \(\delta\) is the marginal cost per position.
An iteration costs approximately \(2\tau+2\delta W\) for Linear-SS and \(\tau+\delta W^2\) for Quadratic-SS. Verifying and drafting in one call with approximately \(2W\) evaluated positions would cost only \(\tau+2\delta W\) per iteration, avoiding both the second-call overhead and the quadratic branching cost.

This raises the central question of this work:
\begin{center}
   \textit{Can we verify and draft in one call, while only evaluating positions linear in the speculative width?}
\end{center}

\paragraph{Contributions.}
We answer affirmatively with \textit{\method{}}, a plug-and-play self-speculative decoding schedule for such models.
The idea is simple: the next draft does not need to wait for the newest verified prefix.
Instead, we let the diffusion drafter run \emph{one call ahead}.
At iteration $t$, one forward pass simultaneously (i) verifies the draft, and (ii) proposes a new draft block using the prefix available at the beginning of iteration $t$.
If verification advances by $A$ positions, the first $A$ positions of this new draft have already been resolved and are discarded. The remaining suffix becomes the draft for the next iteration. Drafting therefore no longer waits for verification. Figure~\ref{fig:packed-overview} illustrates the difference between \method{} and existing schedules.

The two key observations behind this design are as follows.
First, a diffusion drafter predicts tokens independently given its conditioning prefix, so a rejection at one position does not invalidate later draft tokens. 
Second, draft freshness affects speed rather than correctness: standard acceptance and residual correction remain exact when they use the proposal distribution that generated each draft token.
Thus, an older draft may be accepted less often, but does not bias the output.

These observations remove the synchronization between drafting and verification.
With an appropriate attention mask, they can be packed into one forward pass.
Each iteration thus requires only one model call, while the number of evaluated positions remains linear in \(W\). 

Our contributions are summarized as follows:
\begin{enumerate}
\item We introduce \textit{\method{}}, which simultaneously verifies the previous draft and constructs the next draft in a single model call, with input size linear in the speculative width, while preserving the target model's distribution.
\item We theoretically prove that, assuming that correct draft predictions remain correct as verified context grows, \method{} requires no more model calls than \linear{} to generate any prescribed output length.
\item We evaluate \method{} on Nemotron 3B/8B \citep{fu2026nemotron} and \setdlm{} 1.7B \citep{arriola2026setdiffusion}.
Across the primary greedy evaluations, \method{} achieves higher throughput than \linear{} and \quadratic{}.
Under temperature-1 sampling, it consistently outperforms \linear{} while remaining competitive with \quadratic{}.
\end{enumerate}

\begin{figure}[!t]
\centering
\input{figures/schedule_overview}
\caption{
\textbf{Where drafting happens in one call.} Outlined boxes are model calls.
\linear{} must finish verifying before it can draft, so an iteration costs two calls. \quadratic{} fits in one call by drafting a continuation for every place verification might stop, so its input grows quadratically with the speculative width \(W\). \method{} also uses one call, building a single draft from the previously verified prefix alone. 
The right column gives total draft and verifier prediction counts per iteration, including predictions that may later be discarded.
}
\label{fig:packed-overview}
\end{figure}

%% file: figures/schedule_overview.tex
\resizebox{\textwidth}{!}{%
\begin{tikzpicture}[x=1cm,y=1cm,font=\sffamily\small,
  call/.style={draw=black!55,rounded corners=2pt,line width=.7pt},
  ours/.style={call,draw=packedgreen,line width=1.1pt},
  reg/.style={rounded corners=1.5pt,anchor=west,align=center,inner sep=.10cm},
  ver/.style={reg,fill=linearblue!13,draw=linearblue!70,minimum height=.80cm,
              minimum width=2.35cm},
  dra/.style={reg,fill=packedgreen!12,draw=packedgreen!70,minimum height=.80cm},
  brn/.style={reg,fill=packedgreen!12,draw=packedgreen!70,minimum height=.72cm,
              minimum width=4.30cm,inner sep=.09cm,font=\sffamily\footnotesize},
  hdr/.style={anchor=center,font=\sffamily\footnotesize\bfseries,text=black!60,
              text height=1.5ex,text depth=.4ex},
  rlab/.style={anchor=east,font=\sffamily\bfseries},
  cst/.style={anchor=center,font=\sffamily\footnotesize,text=black!75},
  note/.style={font=\sffamily\footnotesize,text=black!62},
  flow/.style={->,>=stealth,line width=.8pt,draw=black!55},
  fan/.style={->,>=stealth,line width=.6pt,draw=packedgreen!70}]

\def\xL{1.57}\def\yl{5.05}\def\yq{2.92}\def\yp{0.72}
\def\xD{8.92}\def\xC{13.10}
\def\ccost{14.15}\def\yhdr{5.83}

\node[hdr] at (5.00,\yhdr) {First model call};
\node[hdr] at (10.92,\yhdr) {Second model call};

\node[rlab] at (1.45,\yl) {Linear};
\draw[call] (\xL,\yl-.48) rectangle (4.18,\yl+.48);
\node[ver] at (1.70,\yl) {Verify old draft};
\draw[flow] (4.28,\yl) -- (8.92,\yl);
\node[note,anchor=south] at (6.50,\yl+.08) {accept some new tokens};
\draw[call] (9.02,\yl-.48) rectangle (12.88,\yl+.48);
\node[dra,minimum width=3.60cm] at (9.15,\yl) {Draft using new tokens};

\node[rlab] at (1.45,\yq) {Quadratic};
\draw[call] (\xL,1.48) rectangle (8.72,4.36);
\node[ver] at (1.70,\yq) {Verify old draft};
\foreach \y in {3.90,3.05,1.95}{\draw[fan] (4.12,\yq) -- (4.26,\y);}
\node[brn] at (4.30,3.90) {Draft if 0 tokens accepted};
\node[brn] at (4.30,3.05) {Draft if 1 token accepted};
\node[note] at (6.38,2.50) {$\vdots$};
\node[brn] at (4.30,1.95) {Draft if \(W\) tokens accepted};

\node[rlab,text=packedgreen!80!black] at (1.45,\yp) {Packed};
\draw[ours] (\xL,\yp-.48) rectangle (8.15,\yp+.48);
\node[ver] at (1.70,\yp) {Verify old draft};
\node[dra,minimum width=3.85cm] at (4.17,\yp) {Draft using old prefix};

\draw[black!45,dashed,line width=.7pt] (\xD,5.61) -- (\xD,0.10);

\draw[black!18] (\xC,5.61) -- (\xC,0.10);
\node[hdr] at (\ccost,\yhdr) {Token evals.};
\draw[black!30,line width=.5pt] (13.25,5.61) -- (15.05,5.61);
\node[cst] at (\ccost,\yl) {\(2W+1\)};
\node[cst] at (\ccost,\yq) {\((W+1)^2\)};
\node[cst] at (\ccost,\yp) {\(\leq2W\)};
\end{tikzpicture}}

%% file: sections/background.tex
\section{Background}
\label{sec:background}

\subsection{Notation}
\label{sec:notation}
We write \(\mathcal V\) for the vocabulary and \(\theta\) for model parameters.
For a sequence \(x\) we write \(x^{(<i)}=(x^{(1)},\ldots,x^{(i-1)})\), and \((x,y)\) denotes the sequence \(x\) followed by item \(y\).
We use \([u]_+=\max\{u,0\}\) to denote the positive part of \(u\). A complete list of variables can be found in \Cref{app:notation}.

\subsection{Preliminaries}
\label{sec:preliminaries}

\paragraph{Speculative decoding.}
Given prefix $P$, the drafter proposes \(W\) draft tokens \(D=(d^{(1)},\ldots,d^{(W)})\), drawing each \(d^{(i)}\) from a drafter distribution \(q^{(i)}\), where \(i\) counts positions after \(P\). In one model call, the target model \(p_\theta\) uses $P$ and $D$ to return target distributions \(p^{(i)}=p_\theta(\cdot\mid P,d^{(<i)})\) for every \(i\leq W+1\).
The draft tokens are then verified from left to right, where each draft token \(d^{(i)}\) is accepted with probability \(\min\{1,\,p^{(i)}(d^{(i)})/q^{(i)}(d^{(i)})\}\).
At the first rejection, the decoder replaces that draft token with a correction token from the residual distribution \(\propto[p^{(i)}-q^{(i)}]_+\). If all tokens in a draft are accepted, the target model yields one extra bonus token drawn from \(p^{(W+1)}=p_\theta(\cdot\mid P,D)\).
The rejection-sampling correction ensures that the generated text follows the target distribution exactly, so a low-quality drafter costs speed but never quality.

In practice, we use the target model to sample at a certain temperature, with temperature \(1\) being the model's original distribution and lower temperature yielding a sharper distribution. At temperature \(0\), the output distribution mass is on its \(\arg\max\) token. This is also called \textit{greedy decoding}. 

\paragraph{Masked diffusion language models.}
A masked diffusion language model (MDLM) generates text by repeatedly
filling missing token positions, represented by a special mask token
\(\masktok\) \citep{sahoo2024mdlm,shi2024md4}.
Following the denoising framework of discrete diffusion
\citep{hoogeboom2021multinomial,austin2021d3pm,campbell2022continuous,lou2024sedd},
the model is trained to predict original tokens from sequences in which
randomly selected tokens have been replaced by \(\masktok\).
During inference, let \(C\) denote the visible tokens and their
positions, and \(U\) the remaining masked positions. One model call
predicts, for each position \(i\in U\), a distribution
\(q_\theta^{(i)}(\cdot\mid C)\) that approximates the conditional
marginal \(p^{(i)}(\cdot\mid C)\) of the data distribution \(p\). Sampling
each position independently from its prediction approximates the
joint distribution \(p(z_U\mid C)\) by the product of marginals
\(q_\theta(z_U\mid C)=\prod_{i\in U}q_\theta^{(i)}(z_i\mid C)\).
This approximation incurs error whenever the tokens in \(U\) are not
conditionally independent given \(C\), incurring intrinsic trade-off between sampling efficiency and accuracy \citep{li2025breaking,chen2025optimal,zhao2026adaptation}. 
To keep this error small, the decoder fills only a subset of the positions in each call, often those with the most confident predictions
\citep{ben2025accelerated,nie2025llada,cai2026confidence}, and then
repeats the model call with the updated context to predict the
remaining positions.
Block diffusion applies this procedure within blocks generated from left
to right, completing each block before starting the next
\citep{arriola2025blockdiffusion}. A block size of one recovers
autoregressive generation, while a single block covering the full
sequence recovers full-sequence masked diffusion.

\paragraph{Shared-weight drafting and verification.}
A model with shared-weight AR and diffusion modes can perform both roles
in speculative decoding: its masked mode serves as the \emph{drafter}
\(q_\theta\), proposing multiple tokens after the verified prefix in
parallel, while its causal mode serves as the \emph{verifier} \(p_\theta\).
When these modes differ only in their attention masks, \method{} can
combine them in one forward pass using a custom attention mask
\citep{miao2024specinfer,spector2023staged,fu2024lookahead}.
Nemotron-Labs-Diffusion supports both modes through joint training on
autoregressive and diffusion objectives \citep{fu2026nemotron}.
\setdlm{} also supports this construction through its any-order
formulation: single-position conditionals in left-to-right order
provide AR verification, while parallel predictions of unresolved
positions provide diffusion drafting \citep{arriola2026setdiffusion}.
See \cref{app:implementation} for model-specific input layouts and
attention-mask constructions.

%% file: sections/related_work.tex
\subsection{Related work}
\label{sec:related_work}

\paragraph{Sequential draft--verify decoding.}
Most speculative decoding methods improve drafting within a sequential
draft--verify schedule
\citep{leviathan2023speculative,chen2023speculative}.
Self-speculative methods reuse the target network: Draft \& Verify \citep{zhang2024draftverify} skips
selected layers without retraining, whereas LayerSkip \citep{elhoushi2024layerskip} trains intermediate
layers for early-exit drafting.
EAGLE \citep{li2024eagle,li2025eagle3} variants instead train auxiliary drafters using target features.
Parallel drafting reduces sequential work, from blockwise parallel decoding \citep{stern2018blockwise}
and SpecDec \citep{xia2023specdec} to diffusion-based drafters such as SpecDiff \citep{christopher2024specdiff} and DiffuSpec \citep{li2025diffuspec}.
DEER \citep{cheng2025deer} aligns prefix-conditioned drafts with an AR teacher, while DFlash \citep{chen2026dflash} uses
a shallow block-diffusion drafter conditioned on target features.
DSpark \citep{cheng2026dspark} adds confidence-based verification-length scheduling,
while DFlow \citep{zhang2026dflow} reuses verifier features from rejected
suffixes to improve subsequent drafts.

\paragraph{Other speculative decoding schedules.}
SpecInfer \citep{miao2024specinfer} and Medusa \citep{cai2024medusa} use tree attention to verify multiple candidate
continuations, while EAGLE-2 \citep{li2024eagle2} adaptively constructs draft trees using
confidence estimates.
Tree-structured verification covers more candidate continuations, but does
not by itself remove the dependency between drafting and verification.
Jacobi decoding \citep{santilli2023parallel} takes a different approach, updating multiple tentative
positions in parallel through fixed-point iteration.
Lookahead Decoding \citep{fu2024lookahead} turns these trajectories into cached $n$-gram candidates
and combines a lookahead branch with a verification branch in one forward
pass.
For diffusion decoding, SSD \citep{gao2025ssd} and FreeDave \citep{wu2025freedave} batch multiple speculative
decoding states, with FreeDave also reusing accepted-state predictions for
subsequent drafting.
Within shared-weight models, the quadratic schedules in
\citet{samragh2025future}, TiDAR \citep{liu2025tidar}, and Nemotron \citep{fu2026nemotron} prepare continuations at every
possible acceptance boundary, combining drafting and verification at the
cost of $O(W^2)$ evaluated positions for speculative width $W$.
Apple's linear decoding \citep{samragh2025future} and Introspective DLMs \citep{yu2026introspective} discard continuations
conditioned on tentative tokens after rejection, which frequently fails to simultaneously yield a draft.
Speculative Jacobi Decoding \citep{teng2024sjd} reuses predictions after rejection, but its
proposals also condition on tentative tokens.

%% file: sections/method.tex
\section{Our Method}
\label{sec:method}

In this section, we first present the high-level description of \method{}. We then describe the procedure under greedy decoding in \cref{sec:method-greedy}, where the main idea is clearest. \cref{sec:method-general} describes the general procedure.

At a high level, iteration $t$ of \method{} reads the verified prefix \(P_{t-1}\) and the current draft \(D_{t-1}\), and performs verification and drafting simultaneously and independently in one model call.
The verifier accepts $A_t$ tokens from the standard speculative decoding process (including one correction or bonus token). These $A_t$ tokens are extended to \(P_{t-1}\) to form the next prefix \(P_t\).
For speculative width \(W\), the drafter reads only \(P_{t-1}\) and generates a new proposal \(Z_t=(z_t^{(i)})_{i=1}^W\) for the next \(W\) positions.
The first \(A_t\) of those positions have just been filled by the verifier, so their
proposal tokens are discarded, and the rest become the draft \(D_t\) that iteration \(t+1\)
verifies.
See \cref{fig:method-notation} for an illustration of one iteration under greedy decoding.

\subsection{Packed self-speculative procedure under greedy decoding}
\label{sec:method-greedy}

We now state the procedure under greedy decoding, and summarize it in \cref{alg:packed}.

\input{figures/method_notation}

\paragraph{Initialization.} Given a prompt \(P\), start with the prefix $P_0=P$ and draft \(D_0=\varnothing\).

\paragraph{Iteration \(t\geq1\).}
Given the prefix \(P_{t-1}\) and draft \(D_{t-1}=(d_{t-1}^{(1)},d_{t-1}^{(2)},\ldots)\) following $P_{t-1}$, iteration \(t\) evaluates the verifier and drafter in one forward pass.

For verification, the AR mode predicts the token at each position of the previous draft plus one position after it, conditioned on the prefix $P_{t-1}$ and the draft tokens before that position:
\begin{equation}
v_t^{(i)}=\arg\max_{x\in\mathcal V}p_\theta(x\mid P_{t-1},d_{t-1}^{(<i)}),
\qquad 1\leq i\leq|D_{t-1}|+1.
\label{eq:packed-verifier}
\end{equation}
Let \(h_t\) be the number of leading draft tokens that match the verifier's predictions:
\begin{equation}
h_t=\max\{0 \leq h \leq |D_{t-1}|:d_{t-1}^{(i)}=v_t^{(i)}\text{ for every }i\leq h\}.
\label{eq:accept}
\end{equation}
Verification yields the accepted draft prefix followed by \(v_t^{(h_t+1)}\), which corrects the first mismatch or supplies a bonus after full acceptance. Thus, a total of \(A_t=h_t+1\) tokens are accepted.

In the same call, the diffusion mode drafts the \(W\) positions that follow the previous prefix \(P_{t-1}\):
\begin{equation}
Z_t=(z_t^{(1)},\ldots,z_t^{(W)}),\qquad
z_t^{(i)}=\arg\max_{x\in\mathcal V}q_\theta^{(i)}(x\mid P_{t-1}).
\label{eq:draft}
\end{equation}
It is important to note that the draft tokens \(Z_t\) are generated conditioned on the previous prefix \(P_{t-1}\), rather than in \linear{} the new prefix \(P_t\). This distinction allows drafting and verification to run in parallel while \linear{} needs to be evaluated sequentially.
On nonterminal iterations, the suffix of \(Z_t\) after its first \(A_t\) tokens becomes the new draft:
\begin{equation}
\label{eq:draft-size-t}
D_t=(z_t^{(A_t+1)},\ldots,z_t^{(W)}).
\end{equation}

\paragraph{Termination.}
Stop at the first accepted end-of-sequence (EOS) token or after \(L\) new tokens.

\begin{algorithm}[t]
\caption{\method{} with LTR greedy decoding}
\label{alg:packed}
\small
\begin{algorithmic}[1]
\REQUIRE prompt \(P\), speculative width \(W\), output length limit \(L\), verifier \(p_\theta\), drafter \(q_\theta\)
\STATE \(P_0\leftarrow P\), \ \(D_0\leftarrow\varnothing\), \ \(t\leftarrow1\)
\WHILE{\(|P_{t-1}|-|P|<L\) and (\(t=1\) or the last token of \(P_{t-1}\) is not \(\mathrm{EOS}\))}
\item[] \textbf{In one model call:}
\begin{ALC@g}
\STATE \(v_t^{(i)}\leftarrow\arg\max_x p_\theta(x\mid P_{t-1},d_{t-1}^{(<i)})\) for \(i=1,\ldots,|D_{t-1}|+1\) \COMMENT{Verifier predictions}
\STATE \(z_t^{(i)}\leftarrow\arg\max_x q_\theta^{(i)}(x\mid P_{t-1})\) for \(i=1,\ldots,W\) \COMMENT{Drafter proposals}
\end{ALC@g}
\item[] \textbf{end}
\STATE \(h_t\leftarrow\max\{0\leq h\leq|D_{t-1}|:d_{t-1}^{(i)}=v_t^{(i)}\text{ for all }i\leq h\}\) \COMMENT{Count matches}
\STATE \(C_t\leftarrow(d_{t-1}^{(1)},\ldots,d_{t-1}^{(h_t)},v_t^{(h_t+1)})\), \ \(A_t\leftarrow h_t+1\) \COMMENT{Add verifier token}
\STATE \(a\leftarrow\min\bigl(\{A_t,L+|P|-|P_{t-1}|\}\cup\{1\leq i\leq A_t:C_t^{(i)}=\mathrm{EOS}\}\bigr)\) \COMMENT{Ending condition}
\STATE \(C_t\leftarrow(C_t^{(1)},\ldots,C_t^{(a)})\) \COMMENT{Prune by stopping index}
\STATE \(P_t\leftarrow(P_{t-1},C_t)\) \COMMENT{Prepare next prefix, append accepted tokens}
\STATE \(D_t\leftarrow(z_t^{(a+1)},\ldots,z_t^{(W)})\) \COMMENT{Prepare next draft, $z_t^{(i)}$ is generated in line 4}
\STATE \(t\leftarrow t+1\)
\ENDWHILE
\STATE \textbf{Output:} \(P_{t-1}\)
\end{algorithmic}
\end{algorithm}

\paragraph{Comparison with \linear{} and \quadratic{}.}
\linear{} alternates between drafting-only and verification-only calls,
whereas \method{} performs both in each call.
By retaining the unresolved suffix of \(Z_t\) as the next draft,
\method{} avoids a separate drafting call.
This overlap comes at the cost of a shorter draft conditioned on less
verified context: on nonterminal iterations of \cref{alg:packed},
\method{} retains \(|D_t|=W-A_t\leq W-1\) tokens by
\eqref{eq:draft-size-t}, whereas \linear{} drafts \(W\) tokens
conditioned on \(A_t\) more verified tokens.
In addition, both \method{} and \quadratic{} draft and verify simultaneously in one model call.
However, \method{} evaluates $O(W)$ positions per iteration rather than $O(W^2)$.
The trade-off is conditioning context: \quadratic{}'s selected proposal excludes only the final verifier token from $P_t$, whereas \method{} conditions on $P_{t-1}$, excluding all $A_t$ newly appended tokens.
Conceptually, \method{} can be viewed as \quadratic{} with only the branch for zero
accepted draft tokens.

\input{sections/theory}

\subsection{General packed self-speculative decoding}
\label{sec:method-general}

We now present \method{} for general sampling. Compared with the greedy procedure in \cref{alg:packed} from \cref{sec:method-greedy}, the general sampling requires two minor modifications.

Consider iteration $t$. First, we store beyond each draft token $d_t^{(i)}$ the distribution from which it was sampled, \(Q_t^{(i)}=q_t^{(a+i)}=q_\theta^{(a+i)}(\cdot\mid P_{t-1})\).
At temperature \(0\), \(Q_{t-1}^{(i)}\) collapses to a point mass on \(d_{t-1}^{(i)}\). So storing the draft token alone is sufficient as in \cref{alg:packed}.

Second, when verifying each draft token \(d_{t-1}^{(i)}\), we replace the deterministic match test \(d_{t-1}^{(i)} = v_{t}^{(i)}\) in \cref{eq:accept} with the standard rejection-sampling acceptance rule.
With \(p_t^{(i)}=p_\theta(\cdot\mid P_{t-1},d_{t-1}^{(<i)})\) the target conditional at that position and \(Q_{t-1}^{(i)}\) its stored drafter distribution, we accept the draft token \(d_{t-1}^{(i)}\) with probability \(\min\{1,p_t^{(i)}(d_{t-1}^{(i)})/Q_{t-1}^{(i)}(d_{t-1}^{(i)})\}\).
Crucially, the denominator is the stored drafter distribution \(Q_{t-1}^{(i)}\) from which $d_{t-1}^{(i)}$ was originally sampled, even though it may have been computed from an older prefix.
At the first rejection, we draw a correction from the normalized residual distribution \( \propto [p_t^{(i)}-Q_{t-1}^{(i)}]_{+}\). If the entire draft is accepted, we draw a bonus token from \(p_t^{(|D_{t-1}|+1)}\).

We summarize the resulting procedure in \Cref{alg:packed-rejection} in \cref{app:sampling}.
The following theorem, proved in \cref{app:proofs}, states that \method{} preserves the distribution of the AR mode. Since greedy decoding is recovered at temperature \(0\), it also implies the correctness of \cref{alg:packed}.
\begin{theorem}[Distributional correctness]
\label{thm:sampling}
The continuation generated by \method{} after the prompt \(P\) has the same distribution as sampling from the AR model \(p_\theta\), stopped at the first EOS or at the output length limit \(L\).
\end{theorem}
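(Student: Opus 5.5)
The plan is an induction on iterations, showing that every newly appended token is distributed as \(p_\theta\) given everything already generated. The key invariant is this: conditional on the full history \(\mathcal H_{t-1}\) up to the start of iteration \(t\) (all prefixes, drafts, and stored distributions), each draft token \(d_{t-1}^{(i)}\) depends only on \(P_{t-2}\). It was sampled independently from \(Q_{t-1}^{(i)}=q_\theta^{(a+i)}(\cdot\mid P_{t-2})\), and its fate is independent of the verifier outcomes of iteration \(t-1\) that were used to determine \(P_{t-1}\). This holds because the drafter's proposals \(Z_{t-1}\) are drawn independently across positions and independently of the verifier randomness in the same call. The only coupling is that the first \(a\) proposals are discarded, and \(a\) is a function of the verifier randomness alone.

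\textbf{Step 1.} Formalize the one-call sampling. At iteration \(t\), conditional on \(P_{t-1}\), the drafter sample \(Z_t\) is drawn from the product \(\prod_i q_\theta^{(i)}(\cdot\mid P_{t-1})\). This draw uses fresh randomness independent of the acceptance coins and residual draws used to verify \(D_{t-1}\). Hence, conditional on \((P_t,a)\), the retained suffix \(D_t=(z_t^{(a+1)},\dots,z_t^{(W)})\) is still distributed as the product of its stored marginals \(Q_t^{(i)}\). These marginals are deterministic functions of \(P_{t-1}\) and \(a\), both of which are measurable with respect to the history.

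\textbf{Step 2.} Prove the standard speculative-sampling lemma with a \emph{stale} proposal. Take a position with target \(p=p_\theta(\cdot\mid P_{t-1},d^{(<i)})\) and a token \(d\sim Q\), where \(Q\) is fixed given the past and \(d\) is independent of the already-accepted tokens \(d^{(<i)}\) beyond what the history determines. Accept \(d\) with probability \(\min\{1,p(d)/Q(d)\}\), and otherwise draw from the normalized \([p-Q]_+\). The output then has law exactly \(p\). The usual computation applies verbatim: \(\min(p,Q)+(1-\sum\min(p,Q))\cdot[p-Q]_+/\sum[p-Q]_+=p\). The only point to check is that \(Q\) need not depend on \(d^{(<i)}\) or be close to \(p\); the identity holds for any \(Q\). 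Bonus tokens are sampled directly from \(p\).

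\textbf{Step 3.} Chain the positions. Within iteration \(t\), proceeding left to right, show that the \(j\)-th appended token, conditional on all previously appended tokens and the history, has law \(p_\theta(\cdot\mid\text{current prefix})\). This requires that, after the tokens \(d^{(<i)}\) are accepted, the conditional law of \(d^{(i)}\) is still \(Q^{(i)}\). That holds by the product structure from Step 1, since acceptance of earlier positions depends only on earlier tokens and independent coins. Combining over iterations by the tower property, the sequence of appended tokens is a process whose one-step conditionals equal \(p_\theta\). Truncation at the first EOS or at length \(L\) is a stopping rule that depends only on the emitted tokens. Therefore the stopped sequence has the law of AR sampling stopped the same way. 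The greedy case in \cref{alg:packed} follows as the temperature-\(0\) specialization.

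The main obstacle is the conditional-independence bookkeeping in Steps 1 and 3. We must argue rigorously that selecting which proposals survive (through \(a\), i.e.\ through \(A_t\) and the stopping index) does not bias the surviving draft tokens' laws relative to their stored \(Q\). I would handle this by explicitly constructing all randomness up front: i.i.d.\ uniforms for each drafter position, acceptance coin, and residual draw in each call. I would then note that \(a\) is a function of the verifier variables only, which are independent of the drafter uniforms of the same call.
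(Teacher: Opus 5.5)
Your proposal is correct and follows essentially the same route as the paper. Step 2 is the paper's one-token correction lemma with the stored (stale) proposal $Q$. Your Step 3 invariant is the paper's lemma that each checked draft token still has law $Q_t^{(i)}$ given the history and the earlier tokens and coins of the call. The tower-property and stopping argument matches the paper's induction on emitted tokens followed by truncation.

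The one small difference is that you sample all $W$ proposals in the call and discard the first $a$. That needs your observation that $a$ is independent of the same-call drafter randomness. The paper's rejection-sampling algorithm instead samples the retained draft tokens only after $a$ is fixed, so the independence is immediate by construction. Also, your stated invariant should condition on the history excluding $D_{t-1}$ itself; otherwise it is trivial.
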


Finally, storing the drafter distributions $Q_{t-1}^{(i)}$ need not require a probability for every vocabulary token at each draft position.
With top-\(k\) \citep{fan2018topk} or top-\(p\) truncation
\citep{holtzman2020nucleus}, we need to store only the nonzero entries
of \(Q_{t-1}^{(i)}\).
For instance, Qwen3 recommends top-\(k=20\) for a vocabulary of \(151{,}936\) tokens \citep{qwen3}, and the reference implementation of Llama 3 uses top-\(p=0.9\) by default \citep{llama3}.

%% file: figures/method_notation.tex
\begin{figure}[t]
\centering
\resizebox{\textwidth}{!}{%
\begin{tikzpicture}[x=1cm,y=1cm,font=\sffamily\small,
  slot/.style={draw=black!45,rounded corners=1.5pt,minimum width=1.5cm,
               minimum height=.74cm,inner sep=1pt},
  tokA/.style={slot,fill=linearblue!18,draw=linearblue!75},
  tokB/.style={slot,fill=quadorange!13,draw=quadorange!50,text=black!50},
  tokC/.style={slot,fill=bonusviolet!20,draw=bonusviolet!75},
  zslot/.style={slot,fill=packedgreen!14,draw=packedgreen!70},
  gone/.style={slot,fill=black!6,draw=black!25,text=black!45},
  pre/.style={draw=black!55,rounded corners=2pt,fill=black!4,anchor=west,
              minimum height=.74cm,minimum width=3.10cm,inner sep=2pt,
              font=\sffamily\footnotesize},
  rlab/.style={anchor=east,font=\sffamily\footnotesize\bfseries},
  hdr/.style={anchor=center,font=\sffamily\scriptsize,text=black!55},
  note/.style={font=\sffamily\scriptsize,text=black!62},
  side/.style={note,anchor=west},
  bar/.style={draw=black!55,line width=.6pt},
  ar/.style={->,>=stealth,line width=.6pt,draw=linearblue!80},
  feed/.style={->,>=stealth,line width=.7pt,draw=packedgreen!75},
  keep/.style={->,>=stealth,line width=.8pt,draw=packedgreen!75}]

\def\cA{5.60}\def\cB{7.30}\def\cC{9.00}\def\cD{10.70}
\def\ya{4.75}\def\yb{3.15}\def\yc{1.70}\def\yd{0.20}
\def\xs{11.75}\def\xp{1.55}

\draw[black!65,line width=.85pt]
  (-.65,3.63) -- (-.85,3.63) -- (-.85,1.22) -- (-.65,1.22);
\node[rotate=90,font=\sffamily\footnotesize\bfseries,text=black!70]
  at (-1.23,2.425) {One model call};

\node[hdr,anchor=east] at (4.65,5.57) {Verified prefix \(P_{t-1}\)};
\node[hdr] at (\cA,5.57) {\(+1\)};
\node[hdr] at (\cB,5.57) {\(+2\)};
\node[hdr] at (\cC,5.57) {\(+3\)};
\node[hdr] at (\cD,5.57) {\(+4\)};

\node[rlab] at (1.40,\ya) {Iteration \(t\) reads};
\node[pre] at (\xp,\ya) {\(P_{t-1}\)};
\node[tokA] at (\cA,\ya) {\(d_{t-1}^{(1)}\)};
\node[tokB] at (\cB,\ya) {\(d_{t-1}^{(2)}\)};
\node[gone] at (\cC,\ya) {\(d_{t-1}^{(3)}\)};
\node[side] at (\xs,\ya) {Draft \(D_{t-1}\)};

\node[note,anchor=east,text=linearblue!70!black] at (4.52,3.95) {AR};
\draw[ar] (4.55,\ya-.37) -- (5.20,\yb+.37);
\draw[ar] (6.05,\ya-.37) -- (6.90,\yb+.37);
\draw[ar,draw=linearblue!35] (7.75,\ya-.37) -- (8.60,\yb+.37);
\draw[ar,draw=linearblue!35] (9.45,\ya-.37) -- (10.30,\yb+.37);

\node[font=\sffamily\small,text=black!70] at (\cA,3.95) {\(=\)};
\node[font=\sffamily\small,text=black!70] at (\cB,3.95) {\(\neq\)};

\node[rlab] at (1.40,\yb) {Verifier \(p_\theta\)};
\node[tokA] at (\cA,\yb) {\(v_t^{(1)}\)};
\node[tokC] at (\cB,\yb) {\(v_t^{(2)}\)};
\node[gone] at (\cC,\yb) {\(v_t^{(3)}\)};
\node[gone] at (\cD,\yb) {\(v_t^{(4)}\)};
\node[side] at (\xs,\yb) {AR predictions};
\node[note,anchor=north] at (9.85,\yb-.45) {Computed, not reached};

\draw[bar] (4.85,2.68) -- (8.05,2.68);
\draw[bar] (4.85,2.68) -- (4.85,2.82);
\draw[bar] (8.05,2.68) -- (8.05,2.82);
\node[note,anchor=north] at (6.45,2.64) {Advance \(A_t=h_t{+}1=2\)};

\node[rlab] at (1.40,\yc) {Drafter \(q_\theta\)};
\node[gone] at (\cA,\yc) {\(z_t^{(1)}\)};
\node[gone] at (\cB,\yc) {\(z_t^{(2)}\)};
\node[zslot] at (\cC,\yc) {\(z_t^{(3)}\)};
\node[zslot] at (\cD,\yc) {\(z_t^{(4)}\)};
\draw[black!45,dashed,line width=.6pt,rounded corners=3pt]
  (4.72,\yc-.48) rectangle (11.58,\yc+.48);
\node[side] at (\xs,\yc) {Proposal \(Z_t\)};

\draw[feed] (3.10,\ya-.37) -- (3.10,\yc) -- (4.62,\yc);
\node[note,anchor=west,text=packedgreen!55!black] at (3.22,\yc+.24) {Diffusion};

\node[rlab] at (1.40,\yd) {Iteration \(t{+}1\) reads};
\node[pre] at (\xp,\yd) {\(P_{t-1}\)};
\node[tokA] at (\cA,\yd) {\(d_{t-1}^{(1)}\)};
\node[tokC] at (\cB,\yd) {\(v_t^{(2)}\)};
\node[zslot] at (\cC,\yd) {\(z_t^{(3)}\)};
\node[zslot] at (\cD,\yd) {\(z_t^{(4)}\)};
\node[side] at (\xs,\yd) {Draft \(D_t\)};

\draw[bar] (\xp,-0.32) -- (8.05,-0.32);
\draw[bar] (\xp,-0.32) -- (\xp,-0.18);
\draw[bar] (8.05,-0.32) -- (8.05,-0.18);
\node[note,anchor=north] at (4.80,-0.36) {Verified prefix \(P_t\)};
\draw[bar] (8.25,-0.32) -- (11.45,-0.32);
\draw[bar] (8.25,-0.32) -- (8.25,-0.18);
\draw[bar] (11.45,-0.32) -- (11.45,-0.18);
\node[note,anchor=north] at (9.85,-0.36) {Draft \(D_t\)};

\draw[keep] (\cC,1.20) -- (\cC,0.60);
\draw[keep] (\cD,1.20) -- (\cD,0.60);
\node[note,anchor=center] at (6.45,0.90) {Discarded};
\node[note,anchor=center,text=packedgreen!55!black] at (9.85,0.90) {Kept};
\end{tikzpicture}}
\caption{\textbf{One iteration of greedy \method{}.}
Columns are token positions. The old draft token \(d_{t-1}^{(i)}\), the verifier prediction
\(v_t^{(i)}\), and the new proposal token \(z_t^{(i)}\) are in the same position.
A single forward pass takes the verified prefix \(P_{t-1}\) and draft
\(D_{t-1}\) as input and produces the two middle rows: the AR verifier's
next-token predictions \(v_t^{(i)}\), conditioned on \(P_{t-1}\) and
\(d_{t-1}^{(1)},\dots,d_{t-1}^{(i-1)}\), and the diffusion drafter's
proposal \(Z_t=(z_t^{(i)})_{i=1}^W\), conditioned on \(P_{t-1}\) alone.
Here, one matching draft token (\(h_t=1\)) and the correction token
\(v_t^{(2)}\) yield \(A_t=2\) output tokens.
Proposals at these two positions are discarded, leaving
\(D_t=(z_t^{(3)},z_t^{(4)})\) as the next draft.}
\label{fig:method-notation}
\end{figure}

%% file: sections/theory.tex
\paragraph{Theoretical guarantees.}

Under greedy decoding and the monotonicity assumption below, \method{}
requires no more forward passes than \linear{} to generate a prescribed
number of tokens from the same prompt.

Fix a prompt and speculative width \(W\), and let \(x_1,x_2,\ldots\)
denote the target model's greedy continuation.
Let \(\widehat x_k(c)\) denote the deterministic drafter's prediction
at position \(k\), conditioned on the prompt and \(x_1,\ldots,x_c\).

\begin{assumption}[Monotonicity of draft correctness]
\label{ass:threshold}
For each output position \(k\), there exists a threshold
\(c_k^\star\leq k\) such that
$\mathds 1\{\widehat x_k(c)=x_k\}=\mathds 1\{c\geq c_k^\star\}$
for every \(c<k\) evaluated by either schedule.
\end{assumption}

Here, \(c_k^\star=k\) indicates that position \(k\) is never correctly
predicted before it is verified. Thus, additional verified context cannot turn a correct prediction
into an incorrect one.
The thresholds need not increase with \(k\): a later position may
become correctly predictable with less context than an earlier one.

\begin{theorem}[Pathwise dominance]
\label{thm:pathwise}
Suppose \method{} and \linear{} use the same target model, drafter,
and speculative width \(W\geq1\), and start from a common verified prefix.
Under \cref{ass:threshold}, \method{}'s verified prefix after \(2t\)
forward passes is at least as long as \linear{}'s after \(t\) iterations,
for every \(t\geq0\).

Let \(\nfe_S(L)\) denote the number of forward passes schedule \(S\)
requires to generate \(L\) tokens after \(P\). Since each \linear{} iteration uses two forward passes, for every \(L\geq0\), one has
\begin{equation}
\nfe_{\text{\method{}}}(L)\leq\nfe_{\text{\linear{}}}(L).
\label{eq:finite-dom}
\end{equation}
\end{theorem}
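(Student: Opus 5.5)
The plan is to reduce both schedules to integer recursions on the verified-prefix length and compare them by induction. By \cref{thm:sampling} at temperature \(0\), every verified prefix produced by either schedule is an initial segment \(x_1,\ldots,x_n\) of the greedy continuation. So a state is described by a prefix length \(n\), and a draft token at position \(k\) produced from context length \(c\) equals \(\widehat x_k(c)\). By \cref{ass:threshold}, that token is accepted exactly when \(c_k^\star\le c\). I set aside EOS and the cap \(L\) at first; they only truncate both runs at the same token and can be reinserted at the end.

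For \linear{}, one iteration from prefix length \(n\) drafts positions \(n+1,\ldots,n+W\) from context \(n\). It then accepts the leading run of positions with \(c_k^\star\le n\) and adds one correction or bonus token. The prefix therefore becomes
\[
f(n)=n+1+\max\{0\le h\le W: c_{n+j}^\star\le n \text{ for all } j\le h\}.
\]
The first lemma is that \(f\) is nondecreasing. Take \(n\le n'\) and let \(h\) be the run length at \(n\).
\begin{itemize}
\item If \(n'\ge f(n)-1\), then \(f(n')\ge n'+1\ge f(n)\).
\item Otherwise the positions \(n'+1,\ldots,n+h\) lie within \(n'+W\). They satisfy \(c_k^\star\le n\le n'\), so they are accepted from \(n'\). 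This gives \(f(n')\ge n+h+1=f(n)\).
\end{itemize}
Iterating, \linear{}'s prefix after \(t\) iterations is \(f^t(n_0)\).

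For \method{}, let \(m_s\) be the prefix length after \(s\) calls. Call \(s+1\) verifies the retained draft from call \(s\). That draft consists of positions \(m_s+1,\ldots,m_{s-1}+W\), all drafted from context \(m_{s-1}\). Call \(s+1\) accepts the leading run of those positions with \(c_k^\star\le m_{s-1}\) and then adds one verifier token, so \(m_{s+1}\ge m_s+1\) always.

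The key two-step lemma reads as follows. If \(m_s\ge N\), then \(m_{s+2}\ge f(N)\). To prove it, let \(h\) be \linear{}'s run length at \(N\).
\begin{itemize}
\item If \(m_{s+1}\ge f(N)\), we are done by monotonicity of the prefix.
\item Otherwise \(m_{s+1}\le N+h\). Call \(s+2\) verifies drafts for positions \(m_{s+1}+1,\ldots,m_s+W\), which were drafted from context \(m_s\ge N\). Since \(N+h\le N+W\le m_s+W\), the positions \(m_{s+1}+1,\ldots,N+h\) are all present in this draft. Each satisfies \(c_k^\star\le N\le m_s\), so all are accepted. The verifier token then lands at position \(N+h+1\) or later, giving \(m_{s+2}\ge f(N)\).
\end{itemize}
The draft is never empty when it is needed: if \(m_{s+1}\ge m_s+W\), the first case already applies. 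This is also the only place where the assumption's clause "for every \(c<k\) evaluated by either schedule" is invoked, with \(c=m_s\) and with \(c=N\).

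Induction on \(t\) then gives the first claim. From \(m_0=n_0\) and \(m_{2t}\ge f^t(n_0)\), the lemma with \(N=f^t(n_0)\) yields \(m_{2t+2}\ge f^{t+1}(n_0)\). Monotonicity of \(f\) is not strictly required by this chain, but it is used when both schedules are compared from unequal starting prefixes.

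For \eqref{eq:finite-dom}, suppose \linear{} first reaches \(L\) tokens after \(t\) iterations, that is, after \(2t\) passes. Then \method{} has at least as many verified tokens after \(2t\) passes. The same greedy tokens, and hence the same EOS position, appear in both runs. Stopping at EOS or at \(L\) therefore happens for \method{} no later.

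I expect the main obstacle to be the bookkeeping in the two-step lemma. Specifically, one must check that the stale draft still covers every position \linear{} would accept, which is the inequality \(N+h\le m_s+W\). One must also check that the truncation in Algorithm~\ref{alg:packed} (the index \(a\), and the first call with \(D_0=\varnothing\)) does not break this coverage.
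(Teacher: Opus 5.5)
Your proof is correct and is essentially the paper's: your $f$ coincides with the paper's $F$ in \cref{lem:F}, your two-step lemma is \cref{lem:two-call}, and your induction and EOS/length-cap reduction match the paper's. The one small difference is that you state the two-step lemma for $m_s\ge N$ rather than for the exact starting length, so the draft built from context $m_s$ directly covers the run \linear{} accepts from $N$. As you note, this makes the monotonicity of $f$ unnecessary in the chain, whereas the paper uses it to pass from $F(c_{2t})$ to $F(\ell_t)$.
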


Since both schedules follow the same greedy continuation, the bound
also holds when stopping at the first emitted EOS token.
See \cref{app:pathwise} for the proof.

%% file: sections/experiments.tex
\section{Experiments}
\label{sec:experiments}

\paragraph{Models and datasets.}
We evaluate \method{} on Nemotron-Labs-Diffusion
\citep{fu2026nemotron} and \setdlm{}
\citep{arriola2026setdiffusion}.
Nemotron 3B and 8B are evaluated on five datasets:
GSM8K \citep{cobbe2021gsm8k} and MATH-500
\citep{hendrycks2021math} for mathematical reasoning,
HumanEval \citep{chen2021humaneval} and MBPP
\citep{austin2021mbpp} for code generation, and
WritingBench \citep{wu2025writingbench} for generation speed
on writing prompts.
For \setdlm{}, we evaluate two separately fine-tuned checkpoints
on GSM8K released by \citep{arriola2026setdiffusion}.

\paragraph{Experimental setup.}
All experiments use batch size one on a single A100 80GB GPU,
with Nemotron in BF16 and \setdlm{} in FP32.
Generations are capped at 512 tokens for code and writing
and 1,024 tokens for mathematics, to avoid repetitive responses dominating the comparison.
Because output lengths can differ across schedules, our primary
efficiency metric is aggregate token throughput, which decomposes as
\[
\text{Throughput}
= \frac{\text{total generated tokens}}{\text{total decoder wall time}}
= \frac{\text{Tokens/NFE}}{\text{Time/NFE}}.
\]
Tokens per NFE (Tokens/NFE) and decoder time per NFE (Time/NFE) are recorded to separate token yield from cost per NFE.
Detailed setup and results are provided in
\cref{app:experimental-details,app:records}.
\subsection{Greedy decoding}
\label{sec:exp-greedy}
\paragraph{Nemotron.}
\linear{} and \method{} use diffusion block size \(B=32\),
while \quadratic{} uses \(B=8\).
Each queried block includes the final token of the verified prefix,
so the effective speculative width is \(W=B-1\).
This gives \method{} and \quadratic{} comparable maximum
input sizes of 64 and 72 token rows, respectively. These block sizes also give the highest throughput among the settings tested in the small-scale experiment
in \cref{app:width}.

\begin{figure}[!t]
\centering
\input{figures/efficiency_greedy}
\caption{\textbf{Greedy decoding efficiency and accuracy.}
Efficiency metrics are ratios to \linear{}; accuracy differences
are in percentage points.
Error bars show 95\% paired bootstrap confidence intervals.
Native is the released non-speculative \setdlm{} diffusion decoder,
shown for reference. WritingBench measures speed only.}
\label{fig:efficiency-greedy}
\end{figure}

Results are summarized in \cref{fig:efficiency-greedy}, with numerical results in \cref{tab:nemotron}.
\method{} achieves the highest measured throughput in all ten
Nemotron settings.
Compared with \linear{}, throughput improves by \(3.5\)--\(9.3\%\),
with all ten 95\% confidence intervals for the throughput
ratio above one.
The improvement comes from a \(13.4\)--\(20.3\%\) increase in
tokens per NFE, which more than offsets the
\(8.9\)--\(10.1\%\) increase in decoder time per NFE.
Compared with \quadratic{}, throughput improves by
\(3.4\)--\(32.3\%\) at the evaluated block sizes.
Accuracy differences between schedules are observed, but they do not
provide a direct comparison of decoding quality because they arise from
finite-precision effects.

\paragraph{\setdlm{}.}
The checkpoints with training horizons \(S_{\max}=16\) and
\(S_{\max}=32\) are evaluated at speculative widths \(W=8\)
and \(W=16\), respectively.
All three speculative schedules use the same width within each
checkpoint.

\method{} improves throughput over \linear{} by \(6.0\%\) at
\(W=8\) and \(1.5\%\) at \(W=16\), as in \cref{fig:efficiency-greedy,tab:setdlm-length-details}.
The additional token yield outweighs the
higher cost per NFE: tokens per NFE increase by
\(8.5\)--\(8.9\%\), while decoder time per NFE increases
by \(2.7\)--\(7.0\%\).
\method{} also has higher throughput than \quadratic{}
in both settings.
The improvements extend to confidence-based verification order;
details are provided in \cref{app:setdlm-conf}.

We also test the native \setdlm{} decoding schedule, which is diffusion decoding instead of
speculative decoding.
At \(W=8\) and \(W=16\), \method{} achieves higher throughput
than Native, with accuracy higher by \(8.72\) and \(7.05\)
percentage points, respectively.

\subsection{Decoding at temperature 1}
\label{sec:exp-sampling}

We further evaluate the sampling procedure of
\cref{sec:method-general} at temperature 1, where draft tokens are
verified probabilistically.
We use the same five datasets and both Nemotron scales as in greedy
decoding, with block size \(B=32\) for \linear{} and \method{} and
\(B=8\) for \quadratic{}.

\paragraph{Scope and baselines.}
\linear{} with speculative sampling is the standard stochastic setting
\citep{leviathan2023speculative,chen2023speculative}.
However, prior empirical evaluations of \quadratic{} are reported
under greedy decoding
\citep{liu2025tidar,samragh2025future,fu2026nemotron}.
We therefore use \linear{} as the primary baseline at temperature 1.
For completeness, we extend \quadratic{} to rejection sampling by
applying the same probabilistic verification to its selected branch
as a secondary comparison. More details are in \cref{app:sampling-protocol}.

\begin{figure}[htbp]
\centering
\input{figures/efficiency_sampling}
\caption{\textbf{Efficiency and accuracy at temperature 1.}
Efficiency metrics are ratios to \linear{}, accuracy differences are in percentage points.
Error bars show 95\% paired bootstrap confidence intervals. WritingBench measures speed only.}
\label{fig:efficiency-sampling}
\end{figure}

\paragraph{Results.}
Results are presented in \cref{fig:efficiency-sampling}, with numerical results in \cref{tab:rejection}. Compared with \linear{}, \method{} achieves higher throughput in all
ten settings, improving throughput by \(2.0\)--\(6.3\%\), with all ten 95\% confidence intervals for the throughput ratio above one.
The improvement again comes from increased token yield:
\method{} generates \(14.8\)--\(21.5\%\) more tokens per NFE,
which outweighs the \(11.9\)--\(15.4\%\) increase in decoder time per NFE.

Compared with \quadratic{}, \method{} shows comparable performance in terms of throughput.
Its throughput point estimate is higher in four settings and lower in
six. The paired 95\% confidence intervals lie above one in three
settings, below one in two, and include one in the remaining five.

\FloatBarrier

%% file: figures/efficiency_greedy.tex
\begingroup
\input{figures/efficiency_style}
\ifdefined\efficiencyFigureWidth\else\newdimen\efficiencyFigureWidth\fi
\efficiencyFigureWidth=\linewidth
\begin{tikzpicture}[x=\efficiencyFigureWidth,y=\efficiencyFigureWidth]
\path[use as bounding box] (0,0) rectangle (1,0.68500000);
\begin{axis}[efficiency axis,
  name=nemotron-0,
  at={(0.15000000\efficiencyFigureWidth,0.64500000\efficiencyFigureWidth)},
  width=0.19475000\efficiencyFigureWidth,height=0.44000000\efficiencyFigureWidth,
  xmin=0.75,xmax=1.16,ymin=-1.55,ymax=10.85,
  xtick={0.8,1,1.1},
  ytick={0,1,2,3,4,6.2,7.2,8.2,9.2,10.2},
  yticklabels={MBPP,HumanEval,GSM8K,MATH-500,WritingBench,MBPP,HumanEval,GSM8K,MATH-500,WritingBench},
]
\path[efficiency band] (axis cs:0.75,-0.46) rectangle (axis cs:1.16,0.46);
\path[efficiency band] (axis cs:0.75,1.54) rectangle (axis cs:1.16,2.46);
\path[efficiency band] (axis cs:0.75,3.54) rectangle (axis cs:1.16,4.46);
\path[efficiency band] (axis cs:0.75,6.74) rectangle (axis cs:1.16,7.66);
\path[efficiency band] (axis cs:0.75,8.74) rectangle (axis cs:1.16,9.66);
\draw[efficiency grid] (axis cs:0.8,-1.55)--(axis cs:0.8,10.85);
\draw[efficiency grid] (axis cs:1,-1.55)--(axis cs:1,10.85);
\draw[efficiency grid] (axis cs:1.1,-1.55)--(axis cs:1.1,10.85);
\draw[efficiency baseline] (axis cs:1,-1.55)--(axis cs:1,10.85);
\draw[efficiency separator] (axis cs:0.75,4.95)--(axis cs:1.16,4.95);
\node[anchor=east,xshift=-4pt,font=\efficiencyGroupFont] at (axis cs:0.75,-1) {Nemotron 3B};
\node[anchor=east,xshift=-4pt,font=\efficiencyGroupFont] at (axis cs:0.75,5.23) {Nemotron 8B};
\addplot[efficiency packed] coordinates {
  (1.044086472089,-0.130) += (0.031045501184,0) -= (0.036637445949,0) %
  (1.051802978774,0.870) += (0.019881153776,0) -= (0.018883315917,0) %
  (1.060521414794,1.870) += (0.010526780512,0) -= (0.011163556719,0) %
  (1.034558820732,2.870) += (0.013170182033,0) -= (0.013604272824,0) %
  (1.093119256707,3.870) += (0.006311149905,0) -= (0.005994817151,0) %
  (1.039777738969,6.070) += (0.015736722912,0) -= (0.015325302676,0) %
  (1.049800801327,7.070) += (0.028608205466,0) -= (0.027829853243,0) %
  (1.054118751328,8.070) += (0.011806183089,0) -= (0.011773041727,0) %
  (1.055590783185,9.070) += (0.015566487143,0) -= (0.015384178019,0) %
  (1.080729831930,10.070) += (0.007910359641,0) -= (0.007786783543,0) %
};
\addplot[efficiency quadratic] coordinates {
  (0.947675982199,0.130) += (0.027227556886,0) -= (0.032072694916,0) %
  (0.890626373580,1.130) += (0.030510000723,0) -= (0.028682527013,0) %
  (0.952022879691,2.130) += (0.010727528815,0) -= (0.010677927746,0) %
  (0.868264920725,3.130) += (0.013427174333,0) -= (0.013255774446,0) %
  (1.057084277020,4.130) += (0.006273747579,0) -= (0.006175537673,0) %
  (0.887517976045,6.330) += (0.022499345912,0) -= (0.022285647750,0) %
  (0.793509795789,7.330) += (0.029387230789,0) -= (0.030071904277,0) %
  (0.964954128012,8.330) += (0.010441942956,0) -= (0.010623797507,0) %
  (0.907455359329,9.330) += (0.014760590905,0) -= (0.014602359348,0) %
  (1.022199153815,10.330) += (0.008497617469,0) -= (0.008548285171,0) %
};
\end{axis}

\begin{axis}[efficiency axis,
  name=nemotron-1,
  at={(0.36675000\efficiencyFigureWidth,0.64500000\efficiencyFigureWidth)},
  width=0.19475000\efficiencyFigureWidth,height=0.44000000\efficiencyFigureWidth,
  xmin=0.9,xmax=1.43,ymin=-1.55,ymax=10.85,
  xtick={1,1.2,1.4},
  ytick={0,1,2,3,4,6.2,7.2,8.2,9.2,10.2},
  yticklabels={},
  yticklabel=\empty,
]
\path[efficiency band] (axis cs:0.9,-0.46) rectangle (axis cs:1.43,0.46);
\path[efficiency band] (axis cs:0.9,1.54) rectangle (axis cs:1.43,2.46);
\path[efficiency band] (axis cs:0.9,3.54) rectangle (axis cs:1.43,4.46);
\path[efficiency band] (axis cs:0.9,6.74) rectangle (axis cs:1.43,7.66);
\path[efficiency band] (axis cs:0.9,8.74) rectangle (axis cs:1.43,9.66);
\draw[efficiency grid] (axis cs:1,-1.55)--(axis cs:1,10.85);
\draw[efficiency grid] (axis cs:1.2,-1.55)--(axis cs:1.2,10.85);
\draw[efficiency grid] (axis cs:1.4,-1.55)--(axis cs:1.4,10.85);
\draw[efficiency baseline] (axis cs:1,-1.55)--(axis cs:1,10.85);
\draw[efficiency separator] (axis cs:0.9,4.95)--(axis cs:1.43,4.95);
\addplot[efficiency packed] coordinates {
  (1.147918777541,-0.130) += (0.034440886793,0) -= (0.040615364561,0) %
  (1.157653506700,0.870) += (0.021918651367,0) -= (0.020780596803,0) %
  (1.161634809476,1.870) += (0.011421553315,0) -= (0.011822343966,0) %
  (1.133566709731,2.870) += (0.014477585895,0) -= (0.014856817823,0) %
  (1.202996080124,3.870) += (0.006668219788,0) -= (0.006320514493,0) %
  (1.140107600681,6.070) += (0.017840673765,0) -= (0.017305925175,0) %
  (1.149342228143,7.070) += (0.032407501975,0) -= (0.032301785393,0) %
  (1.148531048446,8.070) += (0.012908700980,0) -= (0.012739236846,0) %
  (1.149902938435,9.070) += (0.017182555111,0) -= (0.016953126441,0) %
  (1.188346156879,10.070) += (0.008314371039,0) -= (0.008286249017,0) %
};
\addplot[efficiency quadratic] coordinates {
  (1.141574019279,0.130) += (0.033674066669,0) -= (0.039552197692,0) %
  (1.073288868428,1.130) += (0.036258253641,0) -= (0.034240345095,0) %
  (1.146789268148,2.130) += (0.012573398910,0) -= (0.012478327334,0) %
  (1.061094137072,3.130) += (0.016463601524,0) -= (0.016321142881,0) %
  (1.324494963921,4.130) += (0.007584196866,0) -= (0.007542519298,0) %
  (1.047372871744,6.330) += (0.026273910581,0) -= (0.025943553856,0) %
  (0.936775701602,7.330) += (0.035474349621,0) -= (0.036697574305,0) %
  (1.153575934679,8.330) += (0.012557878219,0) -= (0.012865496645,0) %
  (1.097429727017,9.330) += (0.017970369285,0) -= (0.018010407833,0) %
  (1.304708407213,10.330) += (0.009406184297,0) -= (0.009391875461,0) %
};
\end{axis}

\begin{axis}[efficiency axis,
  name=nemotron-2,
  at={(0.58350000\efficiencyFigureWidth,0.64500000\efficiencyFigureWidth)},
  width=0.19475000\efficiencyFigureWidth,height=0.44000000\efficiencyFigureWidth,
  xmin=0.98,xmax=1.42,ymin=-1.55,ymax=10.85,
  xtick={1,1.2,1.4},
  ytick={0,1,2,3,4,6.2,7.2,8.2,9.2,10.2},
  yticklabels={},
  yticklabel=\empty,
]
\path[efficiency band] (axis cs:0.98,-0.46) rectangle (axis cs:1.42,0.46);
\path[efficiency band] (axis cs:0.98,1.54) rectangle (axis cs:1.42,2.46);
\path[efficiency band] (axis cs:0.98,3.54) rectangle (axis cs:1.42,4.46);
\path[efficiency band] (axis cs:0.98,6.74) rectangle (axis cs:1.42,7.66);
\path[efficiency band] (axis cs:0.98,8.74) rectangle (axis cs:1.42,9.66);
\draw[efficiency grid] (axis cs:1,-1.55)--(axis cs:1,10.85);
\draw[efficiency grid] (axis cs:1.2,-1.55)--(axis cs:1.2,10.85);
\draw[efficiency grid] (axis cs:1.4,-1.55)--(axis cs:1.4,10.85);
\draw[efficiency baseline] (axis cs:1,-1.55)--(axis cs:1,10.85);
\draw[efficiency separator] (axis cs:0.98,4.95)--(axis cs:1.42,4.95);
\addplot[efficiency packed] coordinates {
  (1.099447994230,-0.130) += (0.003984979473,0) -= (0.003362208266,0) %
  (1.100637220147,0.870) += (0.003342022453,0) -= (0.003188154100,0) %
  (1.095343095643,1.870) += (0.001614138612,0) -= (0.001517620361,0) %
  (1.095700589484,2.870) += (0.001892542506,0) -= (0.001829083920,0) %
  (1.100516775953,3.870) += (0.001780828374,0) -= (0.001729689172,0) %
  (1.096491642350,6.070) += (0.003735702033,0) -= (0.003819679943,0) %
  (1.094819347337,7.070) += (0.005494371558,0) -= (0.005072102283,0) %
  (1.089565143394,8.070) += (0.001509122514,0) -= (0.001504078549,0) %
  (1.089345375833,9.070) += (0.001594966138,0) -= (0.001582332043,0) %
  (1.099577453837,10.070) += (0.002746981973,0) -= (0.002670946896,0) %
};
\addplot[efficiency quadratic] coordinates {
  (1.204603726086,0.130) += (0.003385004811,0) -= (0.003478833887,0) %
  (1.205094414747,1.130) += (0.003693811281,0) -= (0.003461119449,0) %
  (1.204581625727,2.130) += (0.001528045295,0) -= (0.001443107299,0) %
  (1.222085692677,3.130) += (0.001997607285,0) -= (0.001991546390,0) %
  (1.252970073167,4.130) += (0.001906783566,0) -= (0.001856868000,0) %
  (1.180114544171,6.330) += (0.003489092314,0) -= (0.003452242032,0) %
  (1.180547116838,7.330) += (0.005643237524,0) -= (0.005460150764,0) %
  (1.195472304011,8.330) += (0.001700665184,0) -= (0.001738236630,0) %
  (1.209348444234,9.330) += (0.001701205484,0) -= (0.001755540132,0) %
  (1.276373984799,10.330) += (0.005533455131,0) -= (0.005493413862,0) %
};
\end{axis}

\begin{axis}[efficiency axis,
  name=nemotron-3,
  at={(0.80025000\efficiencyFigureWidth,0.64500000\efficiencyFigureWidth)},
  width=0.19475000\efficiencyFigureWidth,height=0.44000000\efficiencyFigureWidth,
  xmin=-6,xmax=6,ymin=-1.55,ymax=10.85,
  xtick={-4,0,4},
  ytick={0,1,2,3,4,6.2,7.2,8.2,9.2,10.2},
  yticklabels={},
  yticklabel=\empty,
  xticklabel={\pgfmathprintnumber[fixed,precision=0]{\tick}},
]
\path[efficiency band] (axis cs:-6,-0.46) rectangle (axis cs:6,0.46);
\path[efficiency band] (axis cs:-6,1.54) rectangle (axis cs:6,2.46);
\path[efficiency band] (axis cs:-6,3.54) rectangle (axis cs:6,4.46);
\path[efficiency band] (axis cs:-6,6.74) rectangle (axis cs:6,7.66);
\path[efficiency band] (axis cs:-6,8.74) rectangle (axis cs:6,9.66);
\draw[efficiency grid] (axis cs:-4,-1.55)--(axis cs:-4,10.85);
\draw[efficiency grid] (axis cs:0,-1.55)--(axis cs:0,10.85);
\draw[efficiency grid] (axis cs:4,-1.55)--(axis cs:4,10.85);
\draw[efficiency baseline] (axis cs:0,-1.55)--(axis cs:0,10.85);
\draw[efficiency separator] (axis cs:-6,4.95)--(axis cs:6,4.95);
\addplot[efficiency packed] coordinates {
  (-0.264550264550,-0.130) += (1.058201058201,0) -= (1.058201058201,0) %
  (-1.219512195122,0.870) += (2.439024390244,0) -= (2.439024390244,0) %
  (-0.303260045489,1.870) += (0.833965125095,0) -= (0.833965125095,0) %
  (3.000000000000,2.870) += (2.200000000000,0) -= (2.200000000000,0) %
  (-0.264550264550,6.070) += (1.058201058201,0) -= (1.322751322751,0) %
  (1.219512195122,7.070) += (2.439024390244,0) -= (2.439024390244,0) %
  (0.151630022745,8.070) += (0.985595147839,0) -= (0.985595147839,0) %
  (1.400000000000,9.070) += (2.200000000000,0) -= (2.000000000000,0) %
};
\addplot[efficiency quadratic] coordinates {
  (-0.529100529101,0.130) += (1.058201058201,0) -= (1.058201058201,0) %
  (0.609756097561,1.130) += (1.219512195122,0) -= (0.609756097561,0) %
  (-0.151630022745,2.130) += (0.833965125095,0) -= (0.833965125095,0) %
  (2.800000000000,3.130) += (1.800000000000,0) -= (1.800000000000,0) %
  (-0.264550264550,6.330) += (0.793650793651,0) -= (0.793650793651,0) %
  (1.829268292683,7.330) += (2.439024390244,0) -= (1.829268292683,0) %
  (0.682335102350,8.330) += (1.137225170584,0) -= (1.061410159212,0) %
  (-0.200000000000,9.330) += (2.200000000000,0) -= (2.200000000000,0) %
};
\node[font=\efficiencyTaskFont,text=black!55,fill=white,inner sep=1pt] at (axis cs:0,4) {N/A};
\node[font=\efficiencyTaskFont,text=black!55,fill=white,inner sep=1pt] at (axis cs:0,10.2) {N/A};
\end{axis}

\begin{axis}[efficiency axis,
  name=setdlm-0,
  at={(0.15000000\efficiencyFigureWidth,0.17500000\efficiencyFigureWidth)},
  width=0.19475000\efficiencyFigureWidth,height=0.11000000\efficiencyFigureWidth,
  xmin=0.35,xmax=1.42,ymin=-1.25,ymax=1.8,
  xtick={0.5,1,1.4},
  ytick={0,1.12},
  yticklabels={{GSM8K, $W=8$},{GSM8K, $W=16$}},
]
\path[efficiency band] (axis cs:0.35,-0.45) rectangle (axis cs:1.42,0.45);
\draw[efficiency grid] (axis cs:0.5,-1.25)--(axis cs:0.5,1.8);
\draw[efficiency grid] (axis cs:1,-1.25)--(axis cs:1,1.8);
\draw[efficiency grid] (axis cs:1.4,-1.25)--(axis cs:1.4,1.8);
\draw[efficiency baseline] (axis cs:1,-1.25)--(axis cs:1,1.8);
\node[anchor=east,xshift=-4pt,font=\efficiencyGroupFont] at (axis cs:0.35,-0.96) {SetDLM};
\addplot[efficiency packed] coordinates {
  (1.059759426663,-0.230) += (0.001765736251,0) -= (0.001807058271,0) %
  (1.014713502667,0.890) += (0.002051651985,0) -= (0.002112126817,0) %
};
\addplot[efficiency quadratic] coordinates {
  (0.908521283465,0.000) += (0.002229030044,0) -= (0.002232223317,0) %
  (0.420831671009,1.120) += (0.002244045400,0) -= (0.002387520779,0) %
};
\addplot[efficiency native] coordinates {
  (0.813310094455,0.230) += (0.004687145814,0) -= (0.004815948361,0) %
  (0.815310508140,1.350) += (0.005650980580,0) -= (0.005452569250,0) %
};
\end{axis}

\begin{axis}[efficiency axis,
  name=setdlm-1,
  at={(0.36675000\efficiencyFigureWidth,0.17500000\efficiencyFigureWidth)},
  width=0.19475000\efficiencyFigureWidth,height=0.11000000\efficiencyFigureWidth,
  xmin=0.72,xmax=2.2,ymin=-1.25,ymax=1.8,
  xtick={1,1.5,2},
  ytick={0,1.12},
  yticklabels={},
  yticklabel=\empty,
]
\path[efficiency band] (axis cs:0.72,-0.45) rectangle (axis cs:2.2,0.45);
\draw[efficiency grid] (axis cs:1,-1.25)--(axis cs:1,1.8);
\draw[efficiency grid] (axis cs:1.5,-1.25)--(axis cs:1.5,1.8);
\draw[efficiency grid] (axis cs:2,-1.25)--(axis cs:2,1.8);
\draw[efficiency baseline] (axis cs:1,-1.25)--(axis cs:1,1.8);
\addplot[efficiency packed] coordinates {
  (1.088721830630,-0.230) += (0.001741382690,0) -= (0.001747399418,0) %
  (1.085347840221,0.890) += (0.001777269950,0) -= (0.001779333869,0) %
};
\addplot[efficiency quadratic] coordinates {
  (1.307890517867,0.000) += (0.003103211962,0) -= (0.003090149940,0) %
  (1.459263443934,1.120) += (0.006667051678,0) -= (0.006983601227,0) %
};
\addplot[efficiency native] coordinates {
  (0.811724802755,0.230) += (0.004721422199,0) -= (0.004849348686,0) %
  (0.795370881217,1.350) += (0.005459775061,0) -= (0.005253513619,0) %
};
\end{axis}

\begin{axis}[efficiency axis,
  name=setdlm-2,
  at={(0.58350000\efficiencyFigureWidth,0.17500000\efficiencyFigureWidth)},
  width=0.19475000\efficiencyFigureWidth,height=0.11000000\efficiencyFigureWidth,
  xmin=0.87,xmax=3.65,ymin=-1.25,ymax=1.8,
  xtick={1,2,3},
  ytick={0,1.12},
  yticklabels={},
  yticklabel=\empty,
]
\path[efficiency band] (axis cs:0.87,-0.45) rectangle (axis cs:3.65,0.45);
\draw[efficiency grid] (axis cs:1,-1.25)--(axis cs:1,1.8);
\draw[efficiency grid] (axis cs:2,-1.25)--(axis cs:2,1.8);
\draw[efficiency grid] (axis cs:3,-1.25)--(axis cs:3,1.8);
\draw[efficiency baseline] (axis cs:1,-1.25)--(axis cs:1,1.8);
\addplot[efficiency packed] coordinates {
  (1.027329225141,-0.230) += (0.000659841127,0) -= (0.000646925561,0) %
  (1.069610128740,0.890) += (0.001429500611,0) -= (0.001346145549,0) %
};
\addplot[efficiency quadratic] coordinates {
  (1.439581594478,0.000) += (0.001872494299,0) -= (0.001818416200,0) %
  (3.467570395628,1.120) += (0.004719448286,0) -= (0.004715767149,0) %
};
\addplot[efficiency native] coordinates {
  (0.998050815168,0.230) += (0.000925790326,0) -= (0.000921556325,0) %
  (0.975543517809,1.350) += (0.000857620081,0) -= (0.000844410696,0) %
};
\end{axis}

\begin{axis}[efficiency axis,
  name=setdlm-3,
  at={(0.80025000\efficiencyFigureWidth,0.17500000\efficiencyFigureWidth)},
  width=0.19475000\efficiencyFigureWidth,height=0.11000000\efficiencyFigureWidth,
  xmin=-14,xmax=14,ymin=-1.25,ymax=1.8,
  xtick={-10,0,10},
  ytick={0,1.12},
  yticklabels={},
  yticklabel=\empty,
  xticklabel={\pgfmathprintnumber[fixed,precision=0]{\tick}},
]
\path[efficiency band] (axis cs:-14,-0.45) rectangle (axis cs:14,0.45);
\draw[efficiency grid] (axis cs:-10,-1.25)--(axis cs:-10,1.8);
\draw[efficiency grid] (axis cs:0,-1.25)--(axis cs:0,1.8);
\draw[efficiency grid] (axis cs:10,-1.25)--(axis cs:10,1.8);
\draw[efficiency baseline] (axis cs:0,-1.25)--(axis cs:0,1.8);
\addplot[efficiency packed] coordinates {
  (0.000000000000,-0.230) += (0.000000000000,0) -= (0.000000000000,0) %
  (0.000000000000,0.890) += (0.000000000000,0) -= (0.000000000000,0) %
};
\addplot[efficiency quadratic] coordinates {
  (0.000000000000,0.000) += (0.000000000000,0) -= (0.000000000000,0) %
  (0.000000000000,1.120) += (0.000000000000,0) -= (0.000000000000,0) %
};
\addplot[efficiency native] coordinates {
  (-8.718726307809,0.230) += (2.350265352540,0) -= (2.350265352540,0) %
  (-7.050796057619,1.350) += (2.274450341168,0) -= (2.274450341168,0) %
};
\end{axis}

\node[font=\efficiencyPanelFont,anchor=south,yshift=5pt,inner sep=0pt] at (0.24737500,0.64500000) {Throughput $\rightarrow$};
\node[font=\efficiencyPanelFont,anchor=south,yshift=5pt,inner sep=0pt] at (0.46412500,0.64500000) {Tokens/NFE $\rightarrow$};
\node[font=\efficiencyPanelFont,anchor=south,yshift=5pt,inner sep=0pt] at (0.68087500,0.64500000) {Time/NFE $\leftarrow$};
\node[font=\efficiencyPanelFont,anchor=south,yshift=5pt,inner sep=0pt] at (0.89762500,0.64500000) {$\Delta$Accuracy (\%) $\rightarrow$};
\node[font=\efficiencyLegendFont,anchor=south,inner sep=0pt] at (.55,.010) {\efficiencyPackedKey\;\method{}\qquad\efficiencyQuadraticKey\;\quadratic{}\qquad\efficiencyNativeKey\;Native SetDLM\qquad\efficiencyLinearKey\;\linear{}};
\end{tikzpicture}
\endgroup

%% file: figures/efficiency_sampling.tex
\begingroup
\input{figures/efficiency_style}
\ifdefined\efficiencyFigureWidth\else\newdimen\efficiencyFigureWidth\fi
\efficiencyFigureWidth=\linewidth
\begin{tikzpicture}[x=\efficiencyFigureWidth,y=\efficiencyFigureWidth]
\path[use as bounding box] (0,0) rectangle (1,0.54500000);
\begin{axis}[efficiency axis,
  name=nemotron-0,
  at={(0.15000000\efficiencyFigureWidth,0.50500000\efficiencyFigureWidth)},
  width=0.19475000\efficiencyFigureWidth,height=0.44000000\efficiencyFigureWidth,
  xmin=0.75,xmax=1.16,ymin=-1.55,ymax=10.85,
  xtick={0.8,1,1.1},
  ytick={0,1,2,3,4,6.2,7.2,8.2,9.2,10.2},
  yticklabels={MBPP,HumanEval,GSM8K,MATH-500,WritingBench,MBPP,HumanEval,GSM8K,MATH-500,WritingBench},
]
\path[efficiency band] (axis cs:0.75,-0.46) rectangle (axis cs:1.16,0.46);
\path[efficiency band] (axis cs:0.75,1.54) rectangle (axis cs:1.16,2.46);
\path[efficiency band] (axis cs:0.75,3.54) rectangle (axis cs:1.16,4.46);
\path[efficiency band] (axis cs:0.75,6.74) rectangle (axis cs:1.16,7.66);
\path[efficiency band] (axis cs:0.75,8.74) rectangle (axis cs:1.16,9.66);
\draw[efficiency grid] (axis cs:0.8,-1.55)--(axis cs:0.8,10.85);
\draw[efficiency grid] (axis cs:1,-1.55)--(axis cs:1,10.85);
\draw[efficiency grid] (axis cs:1.1,-1.55)--(axis cs:1.1,10.85);
\draw[efficiency baseline] (axis cs:1,-1.55)--(axis cs:1,10.85);
\draw[efficiency separator] (axis cs:0.75,4.95)--(axis cs:1.16,4.95);
\node[anchor=east,xshift=-4pt,font=\efficiencyGroupFont] at (axis cs:0.75,-1) {Nemotron 3B};
\node[anchor=east,xshift=-4pt,font=\efficiencyGroupFont] at (axis cs:0.75,5.23) {Nemotron 8B};
\addplot[efficiency packed] coordinates {
  (1.038906811856,-0.130) += (0.013473654374,0) -= (0.013257342770,0) %
  (1.021386727580,0.870) += (0.016269896070,0) -= (0.016864054403,0) %
  (1.019592025515,1.870) += (0.017854869762,0) -= (0.018292613584,0) %
  (1.035690141222,2.870) += (0.023997884245,0) -= (0.023675671503,0) %
  (1.041162329523,3.870) += (0.006939798874,0) -= (0.006972088378,0) %
  (1.060616315747,6.070) += (0.015141008844,0) -= (0.015260209226,0) %
  (1.059312187084,7.070) += (0.017736445799,0) -= (0.017085076246,0) %
  (1.063283675370,8.070) += (0.017836355026,0) -= (0.017260766203,0) %
  (1.050115711808,9.070) += (0.022471464555,0) -= (0.021450350641,0) %
  (1.052430761945,10.070) += (0.008787076617,0) -= (0.009131619032,0) %
};
\addplot[efficiency quadratic] coordinates {
  (1.071721889030,0.130) += (0.015420724631,0) -= (0.015870358552,0) %
  (1.006219981688,1.130) += (0.020275764292,0) -= (0.021314979897,0) %
  (1.023821583946,2.130) += (0.018092118512,0) -= (0.017987161020,0) %
  (0.992834257929,3.130) += (0.022749986546,0) -= (0.022626662070,0) %
  (1.043242204333,4.130) += (0.008104387153,0) -= (0.007881362039,0) %
  (1.080199768649,6.330) += (0.015220391119,0) -= (0.015015909493,0) %
  (1.034333660681,7.330) += (0.019262419446,0) -= (0.019757807546,0) %
  (1.072337844843,8.330) += (0.017738979145,0) -= (0.017164946426,0) %
  (1.053016762732,9.330) += (0.023299875076,0) -= (0.023312695136,0) %
  (1.013390830294,10.330) += (0.009198451192,0) -= (0.009339631731,0) %
};
\end{axis}

\begin{axis}[efficiency axis,
  name=nemotron-1,
  at={(0.36675000\efficiencyFigureWidth,0.50500000\efficiencyFigureWidth)},
  width=0.19475000\efficiencyFigureWidth,height=0.44000000\efficiencyFigureWidth,
  xmin=0.9,xmax=1.43,ymin=-1.55,ymax=10.85,
  xtick={1,1.2,1.4},
  ytick={0,1,2,3,4,6.2,7.2,8.2,9.2,10.2},
  yticklabels={},
  yticklabel=\empty,
]
\path[efficiency band] (axis cs:0.9,-0.46) rectangle (axis cs:1.43,0.46);
\path[efficiency band] (axis cs:0.9,1.54) rectangle (axis cs:1.43,2.46);
\path[efficiency band] (axis cs:0.9,3.54) rectangle (axis cs:1.43,4.46);
\path[efficiency band] (axis cs:0.9,6.74) rectangle (axis cs:1.43,7.66);
\path[efficiency band] (axis cs:0.9,8.74) rectangle (axis cs:1.43,9.66);
\draw[efficiency grid] (axis cs:1,-1.55)--(axis cs:1,10.85);
\draw[efficiency grid] (axis cs:1.2,-1.55)--(axis cs:1.2,10.85);
\draw[efficiency grid] (axis cs:1.4,-1.55)--(axis cs:1.4,10.85);
\draw[efficiency baseline] (axis cs:1,-1.55)--(axis cs:1,10.85);
\draw[efficiency separator] (axis cs:0.9,4.95)--(axis cs:1.43,4.95);
\addplot[efficiency packed] coordinates {
  (1.169554136544,-0.130) += (0.015108035875,0) -= (0.015070692045,0) %
  (1.148225741144,0.870) += (0.018759599069,0) -= (0.019347919389,0) %
  (1.151425985375,1.870) += (0.019865318342,0) -= (0.020250303093,0) %
  (1.176048003859,2.870) += (0.027080185371,0) -= (0.026747431230,0) %
  (1.200210385827,3.870) += (0.007861357933,0) -= (0.007936380582,0) %
  (1.188111666208,6.070) += (0.017038701928,0) -= (0.017217123670,0) %
  (1.185366528548,7.070) += (0.020326898617,0) -= (0.019549519569,0) %
  (1.193792294780,8.070) += (0.020072486985,0) -= (0.019591614242,0) %
  (1.182818477151,9.070) += (0.025573637320,0) -= (0.024640620626,0) %
  (1.214738563042,10.070) += (0.010016849519,0) -= (0.010481109309,0) %
};
\addplot[efficiency quadratic] coordinates {
  (1.327199518213,0.130) += (0.019004086698,0) -= (0.019607138831,0) %
  (1.233303489920,1.130) += (0.026862368521,0) -= (0.028207571031,0) %
  (1.268265811330,2.130) += (0.022013058056,0) -= (0.021862947602,0) %
  (1.256108598091,3.130) += (0.028781530811,0) -= (0.028741041871,0) %
  (1.359494374090,4.130) += (0.010262840019,0) -= (0.009996065247,0) %
  (1.322276452255,6.330) += (0.018579702308,0) -= (0.018682710813,0) %
  (1.261543467594,7.330) += (0.025483255255,0) -= (0.026276641866,0) %
  (1.328055763338,8.330) += (0.021548752931,0) -= (0.021009531502,0) %
  (1.316035710655,9.330) += (0.028977874384,0) -= (0.028855301044,0) %
  (1.353786762988,10.330) += (0.011643366465,0) -= (0.011894889684,0) %
};
\end{axis}

\begin{axis}[efficiency axis,
  name=nemotron-2,
  at={(0.58350000\efficiencyFigureWidth,0.50500000\efficiencyFigureWidth)},
  width=0.19475000\efficiencyFigureWidth,height=0.44000000\efficiencyFigureWidth,
  xmin=0.98,xmax=1.42,ymin=-1.55,ymax=10.85,
  xtick={1,1.2,1.4},
  ytick={0,1,2,3,4,6.2,7.2,8.2,9.2,10.2},
  yticklabels={},
  yticklabel=\empty,
]
\path[efficiency band] (axis cs:0.98,-0.46) rectangle (axis cs:1.42,0.46);
\path[efficiency band] (axis cs:0.98,1.54) rectangle (axis cs:1.42,2.46);
\path[efficiency band] (axis cs:0.98,3.54) rectangle (axis cs:1.42,4.46);
\path[efficiency band] (axis cs:0.98,6.74) rectangle (axis cs:1.42,7.66);
\path[efficiency band] (axis cs:0.98,8.74) rectangle (axis cs:1.42,9.66);
\draw[efficiency grid] (axis cs:1,-1.55)--(axis cs:1,10.85);
\draw[efficiency grid] (axis cs:1.2,-1.55)--(axis cs:1.2,10.85);
\draw[efficiency grid] (axis cs:1.4,-1.55)--(axis cs:1.4,10.85);
\draw[efficiency baseline] (axis cs:1,-1.55)--(axis cs:1,10.85);
\draw[efficiency separator] (axis cs:0.98,4.95)--(axis cs:1.42,4.95);
\addplot[efficiency packed] coordinates {
  (1.125754613596,-0.130) += (0.001084430790,0) -= (0.001177867549,0) %
  (1.124183142526,0.870) += (0.001614250618,0) -= (0.001780635583,0) %
  (1.129300697300,1.870) += (0.001704291784,0) -= (0.001657174980,0) %
  (1.135521095597,2.870) += (0.001979724329,0) -= (0.001974302597,0) %
  (1.152760094938,3.870) += (0.001700759122,0) -= (0.001670615594,0) %
  (1.120208739549,6.070) += (0.000880615619,0) -= (0.000923302415,0) %
  (1.118996404460,7.070) += (0.001348532254,0) -= (0.001354167624,0) %
  (1.122741110800,8.070) += (0.001046902147,0) -= (0.001062402413,0) %
  (1.126369659887,9.070) += (0.001814293277,0) -= (0.001845620640,0) %
  (1.154221832890,10.070) += (0.002465921153,0) -= (0.002422143545,0) %
};
\addplot[efficiency quadratic] coordinates {
  (1.238380527447,0.130) += (0.002480098331,0) -= (0.002768895872,0) %
  (1.225679784108,1.130) += (0.006532152513,0) -= (0.006200634186,0) %
  (1.238756665436,2.130) += (0.002141014260,0) -= (0.002073594069,0) %
  (1.265174512321,3.130) += (0.002359257556,0) -= (0.002396764619,0) %
  (1.303143573414,4.130) += (0.001938586487,0) -= (0.001941785340,0) %
  (1.224103624748,6.330) += (0.001850162160,0) -= (0.002000625251,0) %
  (1.219667806966,7.330) += (0.004275111692,0) -= (0.004021909035,0) %
  (1.238467680428,8.330) += (0.001656573699,0) -= (0.001711698488,0) %
  (1.249776601126,9.330) += (0.002156088758,0) -= (0.002242701788,0) %
  (1.335897979850,10.330) += (0.005304987817,0) -= (0.005177222917,0) %
};
\end{axis}

\begin{axis}[efficiency axis,
  name=nemotron-3,
  at={(0.80025000\efficiencyFigureWidth,0.50500000\efficiencyFigureWidth)},
  width=0.19475000\efficiencyFigureWidth,height=0.44000000\efficiencyFigureWidth,
  xmin=-6,xmax=6,ymin=-1.55,ymax=10.85,
  xtick={-4,0,4},
  ytick={0,1,2,3,4,6.2,7.2,8.2,9.2,10.2},
  yticklabels={},
  yticklabel=\empty,
  xticklabel={\pgfmathprintnumber[fixed,precision=0]{\tick}},
]
\path[efficiency band] (axis cs:-6,-0.46) rectangle (axis cs:6,0.46);
\path[efficiency band] (axis cs:-6,1.54) rectangle (axis cs:6,2.46);
\path[efficiency band] (axis cs:-6,3.54) rectangle (axis cs:6,4.46);
\path[efficiency band] (axis cs:-6,6.74) rectangle (axis cs:6,7.66);
\path[efficiency band] (axis cs:-6,8.74) rectangle (axis cs:6,9.66);
\draw[efficiency grid] (axis cs:-4,-1.55)--(axis cs:-4,10.85);
\draw[efficiency grid] (axis cs:0,-1.55)--(axis cs:0,10.85);
\draw[efficiency grid] (axis cs:4,-1.55)--(axis cs:4,10.85);
\draw[efficiency baseline] (axis cs:0,-1.55)--(axis cs:0,10.85);
\draw[efficiency separator] (axis cs:-6,4.95)--(axis cs:6,4.95);
\addplot[efficiency packed] coordinates {
  (0.642479213908,-0.130) += (1.662887377173,0) -= (1.625094482237,0) %
  (1.463414634146,0.870) += (1.432926829268,0) -= (1.402439024390,0) %
  (-0.682335102350,1.870) += (2.047005307051,0) -= (1.971190295679,0) %
  (-1.000000000000,2.870) += (3.600000000000,0) -= (3.600000000000,0) %
  (-1.398337112623,6.070) += (1.549508692366,0) -= (1.549508692366,0) %
  (0.640243902439,7.070) += (1.554878048780,0) -= (1.554878048780,0) %
  (-2.047005307051,8.070) += (1.971190295679,0) -= (1.971190295679,0) %
  (0.000000000000,9.070) += (4.000000000000,0) -= (4.000000000000,0) %
};
\addplot[efficiency quadratic] coordinates {
  (1.549508692366,0.130) += (1.625094482237,0) -= (1.625094482237,0) %
  (0.396341463415,1.130) += (1.341463414634,0) -= (1.311737804878,0) %
  (0.682335102350,2.130) += (2.047005307051,0) -= (2.047005307051,0) %
  (-1.800000000000,3.130) += (3.600000000000,0) -= (3.600000000000,0) %
  (-1.776266061980,6.330) += (1.662887377173,0) -= (1.662887377173,0) %
  (0.579268292683,7.330) += (1.554878048780,0) -= (1.524390243902,0) %
  (-0.833965125095,8.330) += (1.895375284306,0) -= (1.895375284306,0) %
  (1.000000000000,9.330) += (3.800000000000,0) -= (3.800000000000,0) %
};
\node[font=\efficiencyTaskFont,text=black!55,fill=white,inner sep=1pt] at (axis cs:0,4) {N/A};
\node[font=\efficiencyTaskFont,text=black!55,fill=white,inner sep=1pt] at (axis cs:0,10.2) {N/A};
\end{axis}

\node[font=\efficiencyPanelFont,anchor=south,yshift=5pt,inner sep=0pt] at (0.24737500,0.50500000) {Throughput $\rightarrow$};
\node[font=\efficiencyPanelFont,anchor=south,yshift=5pt,inner sep=0pt] at (0.46412500,0.50500000) {Tokens/NFE $\rightarrow$};
\node[font=\efficiencyPanelFont,anchor=south,yshift=5pt,inner sep=0pt] at (0.68087500,0.50500000) {Time/NFE $\leftarrow$};
\node[font=\efficiencyPanelFont,anchor=south,yshift=5pt,inner sep=0pt] at (0.89762500,0.50500000) {$\Delta$Accuracy (\%) $\rightarrow$};
\node[font=\efficiencyLegendFont,anchor=south,inner sep=0pt] at (.55,.010) {\efficiencyPackedKey\;\method{}\qquad\efficiencyQuadraticKey\;\quadratic{}\qquad\efficiencyLinearKey\;\linear{}};
\end{tikzpicture}
\endgroup

%% file: sections/conclusion.tex
\section{Discussion and Limitations}
\label{sec:conclusion}

We presented Packed-SS, which overlaps drafting and verification in one
forward pass per iteration while evaluating only $O(W)$ positions.
Under greedy decoding and a monotonicity assumption, we showed that
Packed-SS requires no more forward passes than Linear-SS to
generate a prescribed number of tokens from the same prompt at the same width.
Experiments across multiple tasks and model families demonstrate that
Packed-SS can improve throughput over existing self-speculative decoding
schedules.

Several limitations and directions for future work remain. Our single-call
implementation requires shared-weight models supporting parallel drafting
and autoregressive verification. Combining this schedule with drafters
based on intermediate-layer representations remains unexplored.
The current efficiency analysis relies on an idealized assumption;
extending it to weaker conditions and stochastic sampling would broaden
its applicability. Finally, preparing only a subset of the continuation
branches used by \quadratic{} could provide intermediate designs between
Packed-SS and \quadratic{}, potentially improving upon the current algorithm.

%% file: sections/appendix.tex
\section{Notation}
\label{app:notation}

\Cref{tab:notation} collects the symbols used in \cref{sec:method}.
In left-to-right (LTR) decoding, \(d_t^{(i)}\) is indexed after \(P_t\), while \(z_t^{(i)}\) and \(v_t^{(i)}\) are indexed after \(P_{t-1}\). In the any-order procedure, position indices are fixed output positions after the prompt.

\begin{table}[H]
\caption{\textbf{Symbols.} Sizes and indices, decoder state, and distributions.}
\label{tab:notation}
\centering\footnotesize
\begin{tabular}{@{}l p{0.82\columnwidth}@{}}
\toprule
\multicolumn{2}{@{}l}{\textit{Sizes and indices}}\\
\(t\) & decoding iteration, starting at \(1\)\\
\(i,j\) & token position: relative in LTR, fixed in any-order decoding\\
\(r,s\) & rank in the verification order (any-order setting)\\
\(W\) & speculative width: the fixed number of future positions proposed by the LTR drafter, \(W\geq1\), and the size of the active window in the any-order setting. The schedule and \cref{thm:pathwise} are stated in \(W\) alone\\
\(B\) & Nemotron's block size, \(B=W+1\): the last prefix token and \(W\) draft positions\\
\(u_t\) & number of unresolved positions in the active window, \(u_t=|U_t|\leq W\) (any-order setting)\\
\(\kappa_t\) & available draft-prefix length in verification order \(\pi_t\), with \(0\leq\kappa_t\leq u_t\) (any-order setting)\\
\(h_t\) & number of consecutive accepted draft tokens in verification order\\
\(A_t\) & number of tokens accepted before applying the stopping rule; \(A_t=h_t+1\) for the next-token verifier and \(A_t=\min(h_t+1,u_t)\) for the any-order same-position verifier (\cref{app:any-order})\\
\(a\) & emitted token count after applying the stopping rule, \(a\leq A_t\) (LTR)\\
\(L\) & maximum number of generated tokens, excluding the prompt\\
\(E\) & last output position to generate in any-order decoding; initially \(L\), reduced to the position of an accepted EOS\\
\(n_t=|P_t|\) & LTR prefix length, including the prompt\\
\midrule
\multicolumn{2}{@{}l}{\textit{Decoder state}}\\
\(P\), \(P_t\) & prompt; verified prefix (LTR) or prompt followed by accepted position--token pairs (any-order), with \(P_0=P\)\\
\(D_t\) & draft available for verification at iteration \(t+1\), with \(D_0=\varnothing\); indexed after \(P_t\) in LTR, with \(|D_t|\leq W\). In any-order decoding, a position--token map on \(J_t\)\\
\(Z_t\) & new proposal: \(W\) positions after the input prefix in LTR, or positions in \(U_t\) in any-order decoding\\
\(v_t^{(i)}\) & verifier token at position \(i\)\\
\(U_t\), \(J_t\) & active unresolved positions; positions retained for the next draft, \(J_t\subseteq U_t\) (any-order)\\
\(\pi_t\) & verification order used by iteration \(t\)\\
\(\sigma\) & generation order of an any-order model\\
\(C_t\) & tokens accepted in iteration \(t\) after applying the stopping rule; tokens in LTR or ordered position--token pairs in any-order decoding\\
\midrule
\multicolumn{2}{@{}l}{\textit{Distributions}}\\
\(\mathcal V\), \(\theta\) & vocabulary; shared weights\\
\(p_\theta\) & target token distribution given the prompt, accepted tokens, and earlier draft tokens\\
\(q_\theta^{(i)}\) & drafter conditional at position \(i\): relative to its conditioning prefix in LTR, fixed in any-order decoding\\
\(q_t^{(j)}\) & fresh drafter conditional \(q_\theta^{(j)}(\cdot\mid P_{t-1})\), before prefix advancement (LTR)\\
\(Q_t^{(i)}\) & distribution stored with \(d_t^{(i)}\); \(Q_t^{(i)}=q_t^{(a+i)}\) after retaining the LTR suffix\\
\(p_t^{(i)}\) & verifier conditional at position \(i\) after \(P_{t-1}\) (LTR)\\
\midrule
\multicolumn{2}{@{}l}{\textit{Analysis (\cref{thm:pathwise})}}\\
\(c_k^\star\) & correctness threshold for output position \(k\); \(c_k^\star=k\) denotes no correct draft prediction before verification (\cref{ass:threshold})\\
\(x_k\), \(\widehat x_k(c)\) & target greedy token at output position \(k\); drafter's prediction from the first \(c\) verified output tokens\\
\(\ell_t\), \(c_t\) & verified output lengths after \(t\) Linear iterations and \(t\) Packed forward passes, respectively (\cref{app:pathwise})\\
\(F(c)\) & verified output length after fresh drafting and verification from \(c\) verified output tokens, before stopping (\cref{lem:F})\\
\(\nfe_S(L)\) & total model calls used by schedule \(S\) to generate \(L\) tokens after the prompt\\
\bottomrule
\end{tabular}
\end{table}

\section{Proofs}
\label{app:proofs}

\subsection{Pathwise dominance}
\label{app:pathwise}

We prove \cref{thm:pathwise} using two lemmas. The first describes the progress of fresh drafting and verification; the second compares this progress with two forward passes of \method{}. The comparisons below concern verified output lengths, excluding the prompt, before applying EOS or length truncation; stopping is handled at the end of the theorem's proof.

\begin{lemma}[Monotonicity of verified output length]
\label{lem:F}
Count verified output tokens after the prompt by \(c\), and define
\begin{equation}
F(c)=\min\bigl\{c+W+1,\ \min\{k>c:c_k^\star>c\}\bigr\},
\label{eq:F}
\end{equation}
with the minimum of an empty set read as \(+\infty\). Then \(F\) is nondecreasing. Under \cref{ass:threshold}, \(F(c)\) is the verified output length after freshly drafting \(W\) tokens from a context of \(c\) verified output tokens and verifying that draft, for any such context evaluated by either schedule. Thus, \(F(c)-c\) counts the newly accepted tokens, including the correction or bonus token.
\end{lemma}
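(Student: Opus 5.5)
Two claims need proof: that \(F\) is nondecreasing, and that \(F(c)\) is the verified length after fresh drafting and verification from \(c\). I will establish the second, operational claim first, since it explains the formula, and then prove monotonicity directly from \eqref{eq:F}.

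For the operational claim, fix a context of \(c\) verified output tokens \(x_1,\ldots,x_c\). Because the verified prefix always agrees with the greedy continuation, a fresh draft proposes \(\widehat x_{c+j}(c)\) for \(j=1,\ldots,W\). Since the draft positions are exactly \(x_{c+1},\ldots,x_{c+i-1}\) whenever the earlier draft tokens match, greedy verification at relative position \(i\) returns \(x_{c+i}\) as long as all earlier draft tokens matched. Hence \(h\) is the largest \(h\le W\) with \(\widehat x_{c+j}(c)=x_{c+j}\) for all \(j\le h\). By \cref{ass:threshold}, applicable because \(c<c+j\) and the context is evaluated by a schedule, \(\widehat x_{c+j}(c)=x_{c+j}\) holds iff \(c_{c+j}^\star\le c\). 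Therefore the first mismatch occurs at the smallest \(k>c\) with \(c_k^\star>c\), provided \(k\le c+W\); otherwise \(h=W\). The new length is \(c+h+1\). In the mismatch case this equals \(k\), since the correction token fills position \(k\). In the full-acceptance case it equals \(c+W+1\), since a bonus token is added. In both cases the length is \(\min\{c+W+1,k\}=F(c)\). When \(k>c+W+1\) the inner minimum is irrelevant, and when \(k=c+W+1\) both expressions coincide, so the boundary cases are consistent. This part is mechanical.

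For monotonicity, I compare \(F(c)\) with \(F(c+1)\); comparing consecutive arguments suffices. The first term \(c+W+1\) increases with \(c\), so it remains to handle \(m(c)=\min\{k>c:c_k^\star>c\}\), and the main obstacle is that the condition \(c_k^\star>c\) weakens as \(c\) grows. Let \(k\in S(c+1)=\{k>c+1: c_k^\star>c+1\}\). Then \(k>c\) and \(c_k^\star>c\), so \(S(c+1)\subseteq S(c)\) and hence \(m(c+1)\ge m(c)\). Since both terms of the outer minimum are nondecreasing, \(F(c+1)\ge F(c)\).

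I do not expect to need the observation that \(c_{c+1}^\star\ge c+1>c\) is forced only when the threshold equals \(k\). The set inclusion alone settles monotonicity. The only delicate point is ensuring that the assumption is invoked only for contexts actually evaluated, which is exactly the qualifier in the lemma statement.
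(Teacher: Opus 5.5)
Your proof is correct. The operational part follows the paper's own case analysis. The first incorrect proposal sits at the least $k>c$ with $c_k^\star>c$ whenever $k\le c+W$, giving length $k$; otherwise all $W$ proposals are accepted and the bonus token gives $c+W+1$.

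For monotonicity you take a different and cleaner route. You observe that for any $c'\ge c$ the set $\{k>c' : c_k^\star>c'\}$ is contained in $\{k>c : c_k^\star>c\}$. Hence the inner minimum is itself nondecreasing, and $F$ is a minimum of two nondecreasing functions.

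The paper instead compares $F(c')$ directly with $F(c)$ and splits into two cases:
\begin{itemize}
\item If $c'\ge F(c)$, then trivially $F(c')\ge c'+1>F(c)$.
\item If $c'<F(c)$, every position $k$ with $c'<k<F(c)$ has $c_k^\star\le c\le c'$. So neither term of $F(c')$ can fall below $F(c)$.
\end{itemize}

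Your nesting argument avoids the case split and handles every $c'\ge c$ at once, so your reduction to consecutive arguments is unnecessary. The paper's version has one small advantage: it isolates the fact that all fresh proposals strictly between $c$ and $F(c)$ are correct, and \cref{lem:two-call} reuses exactly that fact.

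One wording fix: as $c$ grows, the condition $c_k^\star>c$ becomes \emph{more} restrictive, not weaker. That is precisely why the set shrinks, so there is no real ``obstacle.'' The closing remark about $c_{c+1}^\star$ is muddled and can simply be dropped.
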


\begin{proof}
We first identify the number of tokens accepted by fresh drafting and verification.
Fix a context of \(c\) verified output tokens.
If the draft contains an incorrect proposal, let \(k\) be its first incorrect position.
By \cref{ass:threshold}, this is the first position in \(c+1,\ldots,c+W\) with \(c_k^\star>c\).
Verification accepts the earlier proposals and adds the corrected token at \(k\), giving a verified output length of \(k\).
If all \(W\) proposals are correct, verification accepts them and adds a bonus token at \(c+W+1\).
These two cases give the expression for \(F(c)\) in \cref{eq:F}.

We next prove monotonicity.
Take \(c'\geq c\).
If \(c'\geq F(c)\), then
\[
F(c')\geq c'+1>F(c).
\]
It remains to consider \(c'<F(c)\).
For every position \(k\) satisfying \(c'<k<F(c)\), the definition of \(F(c)\) gives
\[
c_k^\star\leq c\leq c'.
\]
Thus, the first position with \(c_k^\star>c'\) cannot occur before \(F(c)\).
The other term in the minimum defining \(F(c')\) also satisfies
\[
c'+W+1\geq c+W+1\geq F(c).
\]
Both terms are therefore at least \(F(c)\), proving \(F(c')\geq F(c)\).
\end{proof}

\begin{lemma}[Verified output length after two forward passes]
\label{lem:two-call}
Under \cref{ass:threshold}, if a \method{} iteration begins with \(c\) verified output tokens and the next iteration ends with \(c''\), then \(c''\geq F(c)\).
\end{lemma}

\begin{proof}
Let \(c'\) be the verified output length after the first forward pass.
During this pass, \method{} verifies its stored draft and produces new proposals for positions \(c+1,\ldots,c+W\), using the context of length \(c\).
Every pass accepts at least one token, so \(c'>c\).

If \(c'\geq F(c)\), the desired inequality already holds, since the next pass can only increase the verified output length.
Otherwise, consider the positions
\[
c'+1,\ldots,F(c)-1.
\]
The new proposals at these positions are correct by the definition of \(F(c)\).
They also belong to the newly drafted block, because \(F(c)-1\leq c+W\).
Since these positions lie beyond the updated prefix, their proposals are retained for the next pass.

The next pass therefore accepts all of these retained proposals.
Its correction or bonus token is added at a position at least \(F(c)\), so its final verified output length satisfies \(c''\geq F(c)\).
This argument also covers an empty range of retained positions, since the next pass still accepts at least one token.
\end{proof}

\begin{proof}[Proof of \cref{thm:pathwise}]
We first compare the two schedules without EOS or length truncation.
Let \(\ell_t\) be the number of verified output tokens after \(t\) iterations of \linear{}, and let \(c_t\) be the corresponding number after \(t\) forward passes of \method{}.
Both counts exclude the prompt, and \(c_0=\ell_0=0\).
A \linear{} iteration drafts from its current verified prefix and then verifies that draft, so
\[
\ell_{t+1}=F(\ell_t).
\]

We show by induction that \(c_{2t}\geq\ell_t\) for every \(t\geq0\).
The claim holds at \(t=0\).
Suppose it holds for some \(t\).
Then
\[
c_{2t+2}\geq F(c_{2t})\geq F(\ell_t)=\ell_{t+1}.
\]
The first inequality follows from \cref{lem:two-call}; the second follows from the induction hypothesis and the monotonicity in \cref{lem:F}.
This proves the induction step.
The argument includes initialization: the first \method{} pass starts with an empty draft and generates one target token, so \cref{lem:two-call} applies from \(c_0=0\).

Now fix \(L>0\), and let \(T\) be the first \linear{} iteration for which \(\ell_T\geq L\).
Since each \linear{} iteration uses two forward passes, its total cost is \(2T\).
The inequality \(c_{2T}\geq\ell_T\geq L\) shows that \method{} reaches the same output length within those \(2T\) passes.
Therefore,
\[
\nfe_{\text{\method{}}}(L)\leq 2T
=\nfe_{\text{\linear{}}}(L).
\]
For \(L=0\), neither schedule makes a call.

Finally, both schedules produce the same greedy continuation.
If that continuation contains an EOS token, its first occurrence is therefore at the same position for both schedules.
Apply the same comparison to that position or the length limit \(L\), whichever is reached first.
Truncating the final block changes the returned output but does not remove the forward pass that produced it.
The same bound thus holds with EOS and length stopping.
\end{proof}

\subsection{Exact sampling}
\label{app:sampling}

\Cref{alg:packed-rejection} gives the full LTR sampling procedure corresponding to \cref{sec:method-general}.
Compared with \cref{alg:packed}, the decoder stores the sampling distribution \(Q_{t-1}^{(i)}\) with each draft token and replaces greedy matching with rejection sampling.

\begin{algorithm}[H]
\caption{\method{} with LTR rejection sampling}
\label{alg:packed-rejection}
\small
\begin{algorithmic}[1]
\REQUIRE prompt \(P\), speculative width \(W\), output length limit \(L\), verifier \(p_\theta\), drafter \(q_\theta\)
\STATE \(P_0\leftarrow P\), \ \(D_0,Q_0\leftarrow\varnothing\), \ \(t\leftarrow1\)
\WHILE{\(|P_{t-1}|-|P|<L\) and (\(t=1\) or the last token of \(P_{t-1}\) is not \(\mathrm{EOS}\))}
\item[] \textbf{In one model call:}
\begin{ALC@g}
\STATE \(p_t^{(i)}\leftarrow p_\theta(\cdot\mid P_{t-1},d_{t-1}^{(<i)})\) for \(i=1,\ldots,|D_{t-1}|+1\) \COMMENT{Verifier distributions}
\STATE \(q_t^{(j)}\leftarrow q_\theta^{(j)}(\cdot\mid P_{t-1})\) for \(j=1,\ldots,W\) \COMMENT{Drafter proposals}
\end{ALC@g}
\item[] \textbf{end}
\STATE \(u_i\sim\mathrm{Unif}(0,1)\), \ \(\alpha_i\leftarrow\min\{1,\,p_t^{(i)}(d_{t-1}^{(i)})/Q_{t-1}^{(i)}(d_{t-1}^{(i)})\}\) for \(i=1,\ldots,|D_{t-1}|\) \COMMENT{Acceptance tests}
\STATE \(h_t\leftarrow\max\{0\leq h\leq|D_{t-1}|:u_i<\alpha_i\text{ for all }i\leq h\}\) \COMMENT{Count accepted tokens}
\STATE \(c\sim R\), \quad \(R\propto\begin{cases}[p_t^{(h_t+1)}-Q_{t-1}^{(h_t+1)}]_+, & h_t<|D_{t-1}|,\\ p_t^{(|D_{t-1}|+1)}, & h_t=|D_{t-1}|.\end{cases}\) \COMMENT{Correction or bonus}
\STATE \(C_t\leftarrow(d_{t-1}^{(1)},\ldots,d_{t-1}^{(h_t)},c)\), \ \(A_t\leftarrow h_t+1\) \COMMENT{Add verifier token}
\STATE \(a\leftarrow\min\bigl(\{A_t,L+|P|-|P_{t-1}|\}\cup\{1\leq i\leq A_t:C_t^{(i)}=\mathrm{EOS}\}\bigr)\) \COMMENT{Ending condition}
\STATE \(C_t\leftarrow(C_t^{(1)},\ldots,C_t^{(a)})\) \COMMENT{Prune by stopping index}
\STATE \(P_t\leftarrow(P_{t-1},C_t)\) \COMMENT{Append to prefix}
\STATE \(Q_t^{(j)}\leftarrow q_t^{(a+j)}\), \ \(d_t^{(j)}\sim Q_t^{(j)}\) for \(j=1,\ldots,\max(0,W-a)\) \COMMENT{Sample retained draft}
\STATE \(D_t\leftarrow(d_t^{(1)},\ldots,d_t^{(\max(0,W-a))})\) \COMMENT{Prepare next draft}
\STATE \(t\leftarrow t+1\)
\ENDWHILE
\STATE \textbf{Output:} \(P_{t-1}\)
\end{algorithmic}
\end{algorithm}

We now prove \cref{thm:sampling}.
First, we show that the acceptance and correction steps produce the correct target distribution for one token.
Then, we show that each draft token still follows its stored sampling distribution when it is checked.

\paragraph{Conditioning information.}
Fix the prompt. Let \(\mathcal G_t\) collect the information used for conditioning immediately before sampling the draft \(D_t\).
It includes the verified prefix \(P_{t-1}\) at the start of call \(t\), the draft \(D_{t-1}\) checked in that call, and the distributions \(Q_{t-1}\) used to sample that draft.
It also includes the tokens \(C_t\) accepted after applying the stopping rule and the new drafter distributions \(q_t^{(1)},\ldots,q_t^{(W)}\), computed from \(P_{t-1}\).
Define
\[
\mathcal G_t:=\left(P_{t-1},D_{t-1},Q_{t-1},C_t,(q_t^{(j)})_{j=1}^{W}\right).
\]
Conditional on \(\mathcal G_t\), the advance \(a=|C_t|\), the updated prefix \(P_t\), and the stored distributions \(Q_t^{(i)}\) for the draft tokens are fixed.
All draft sampling, acceptance tests, and correction draws use independent random numbers.

Write \(u_{t+1,j}\) for the uniform random number used to test draft token \(d_t^{(j)}\) in call \(t+1\).
For \(1\leq i\leq|D_t|\), add the earlier draft tokens and the random numbers used to test them:
\[
\mathcal G_{t,i}:=\left(\mathcal G_t,(d_t^{(j)},u_{t+1,j})_{1\leq j<i}\right).
\]
Conditional on \(\mathcal G_{t,i}\), the preceding acceptance decisions are fixed, so whether verification reaches position \(i\) is also fixed.

\begin{lemma}[One-token correction using the original sampling distribution]
\label{lem:rs}
Consider the \(i\)-th draft token reached by left-to-right verification in call \(t+1\).
Let \(H=(P_t,d_t^{(<i)})\) be the verified prefix, including the draft tokens already accepted in this call.
Let \(P(x)=p(x\mid H)\) be the target distribution for the next token.
Let \(D=d_t^{(i)}\) be the draft token and \(Q=Q_t^{(i)}\) the distribution used to sample it.
Assume \(\Pr(D=x\mid\mathcal G_{t,i})=Q(x)\) for every token \(x\).
Accept \(D\) with probability \(\min\{1,P(D)/Q(D)\}\), and otherwise emit an independent draw from the normalized positive part of \(P-Q\).
Then, given \(\mathcal G_{t,i}\), the emitted token has distribution \(P\).
\end{lemma}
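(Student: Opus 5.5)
This is the standard one-step rejection-sampling identity, and the plan is to prove it by a direct computation conditional on \(\mathcal G_{t,i}\). The key observation is what is fixed by this conditioning: \(H\), \(P\) and \(Q\) are all determined by \(\mathcal G_{t,i}\), since \(P_t\), the earlier draft tokens \(d_t^{(<i)}\), and the stored distribution \(Q_t^{(i)}=q_t^{(a+i)}\) are included or computable from it. The only remaining randomness is the draft token \(D\) (law \(Q\) by hypothesis), the fresh uniform \(u_{t+1,i}\), and the independent residual draw. I would state this explicitly first. The point to stress is that \(Q\) is the distribution actually used to sample \(D\), even though it was computed from an older prefix, so the hypothesis \(\Pr(D=x\mid\mathcal G_{t,i})=Q(x)\) is exactly what is needed.

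Next, fix a token \(x\) and split the event that \(x\) is emitted into two disjoint cases.

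\emph{Acceptance.} Here \(D=x\) and \(u<\min\{1,P(x)/Q(x)\}\), which has probability
\[
Q(x)\min\{1,P(x)/Q(x)\}=\min\{P(x),Q(x)\}.
\]
When \(Q(x)=0\) the term is zero, consistent with the formula.

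\emph{Rejection.} The total rejection probability is
\[
\beta=\sum_y\bigl(Q(y)-\min\{P(y),Q(y)\}\bigr)=\sum_y[Q(y)-P(y)]_+=\sum_y[P(y)-Q(y)]_+,
\]
where the last equality uses that both \(P\) and \(Q\) sum to one. Given rejection, the independent residual draw emits \(x\) with probability \([P(x)-Q(x)]_+/\beta\). The rejection case therefore contributes \([P(x)-Q(x)]_+\).

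Adding the two cases gives \(\min\{P,Q\}(x)+[P-Q]_+(x)=P(x)\), which is the claim.

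The only delicate step is the degenerate case \(\beta=0\), where the residual distribution is undefined. In that case \(P=Q\), rejection has probability zero, and the first case alone gives \(P\). The other point needing care is the conditional independence of \(u_{t+1,i}\) and the residual draw from \(D\) given \(\mathcal G_{t,i}\). I would justify this from the stipulation that all acceptance tests and correction draws use fresh independent random numbers, and that \(\mathcal G_{t,i}\) contains only earlier-indexed uniforms \(u_{t+1,j}\), \(j<i\).
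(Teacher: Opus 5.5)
Your proof is correct and follows essentially the same route as the paper. You condition on $\mathcal G_{t,i}$, split the emission event into an acceptance branch contributing $\min\{P(x),Q(x)\}$ and a rejection branch contributing $[P(x)-Q(x)]_+$, and treat the degenerate case $P=Q$ separately. The differences are cosmetic: you let $\beta$ denote the rejection probability, computed as $\sum_y[Q(y)-P(y)]_+$ and then rewritten as $\sum_y[P(y)-Q(y)]_+$, whereas the paper uses $\beta$ for the acceptance probability and writes $1-\beta=\sum_x[P(x)-Q(x)]_+$ directly; your added remarks on what $\mathcal G_{t,i}$ fixes and on the independence of the fresh uniforms are consistent with the paper's setup.
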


\begin{proof}
Condition on \(\mathcal G_{t,i}\), so that \(P\) and \(Q\) are fixed.
Let \(X\) denote the emitted token and \(A\) the event that the draft token is accepted.
We compute the probability of emitting a given token \(x\) through each of the two possible branches.

The acceptance branch contributes
\[
\Pr(X=x,A\mid\mathcal G_{t,i})=\min\{P(x),Q(x)\}.
\]
For \(Q(x)>0\), this follows by multiplying the proposal probability \(Q(x)\) by the acceptance probability \(\min\{1,P(x)/Q(x)\}\).
For \(Q(x)=0\), both sides are zero.
It follows that the total acceptance probability is
\[
\beta:=\Pr(A\mid\mathcal G_{t,i})=\sum_x\min\{P(x),Q(x)\}.
\]

The rejection probability is
\[
1-\beta=\sum_x\bigl(P(x)-\min\{P(x),Q(x)\}\bigr)
=\sum_x[P(x)-Q(x)]_+.
\]
When \(\beta<1\), this is exactly the normalizing constant of the correction distribution.
The rejection branch therefore contributes
\[
\Pr(X=x,A^{\mathrm c}\mid\mathcal G_{t,i})
=(1-\beta)\frac{[P(x)-Q(x)]_+}{1-\beta}
=[P(x)-Q(x)]_+.
\]
Adding the two contributions yields
\[
\Pr(X=x\mid\mathcal G_{t,i})
=\min\{P(x),Q(x)\}+[P(x)-Q(x)]_+
=P(x).
\]

If \(\beta=1\), then \(P=Q\) and rejection has probability zero, so the same conclusion holds without a correction draw.
Also, \(Q(D)>0\) with probability one because \(D\) is sampled from \(Q\), so the acceptance test never divides by zero.
Neither distribution needs to assign positive probability to every token.
\end{proof}

This is the standard speculative-sampling rule \citep{leviathan2023speculative,chen2023speculative}.
The next lemma shows that each draft token follows its stored sampling distribution when checked, as required by \cref{lem:rs}.

\begin{lemma}[Distribution of a draft token when checked]
\label{lem:suffix}
For each draft position \(1\leq i\leq|D_t|\) reached by verification in call \(t+1\), and every token \(x\),
\begin{equation}
\Pr(d_t^{(i)}=x\mid\mathcal G_{t,i})=Q_t^{(i)}(x).
\label{eq:draft-conditional}
\end{equation}
\end{lemma}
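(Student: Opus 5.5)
The claim follows from how the draft is generated and what the conditioning $\sigma$-field $\mathcal G_{t,i}$ contains. By line~13 of \cref{alg:packed-rejection}, conditional on $\mathcal G_t$ the quantities $a$, $P_t$, and the stored distributions $Q_t^{(j)}=q_t^{(a+j)}$ are all fixed. The draft tokens $d_t^{(1)},\ldots,d_t^{(|D_t|)}$ are then drawn independently, with $d_t^{(j)}\sim Q_t^{(j)}$, using fresh randomness that does not enter $\mathcal G_t$. So the first step is to record that, given $\mathcal G_t$, the draft tokens are mutually independent with these marginals. They are also independent of the future uniforms $u_{t+1,j}$, because all random numbers are drawn independently.

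The second step is to pass from $\mathcal G_t$ to $\mathcal G_{t,i}$. The extra information in $\mathcal G_{t,i}$ is $(d_t^{(j)},u_{t+1,j})_{j<i}$. I would argue that this is independent of $d_t^{(i)}$ given $\mathcal G_t$. The earlier draft tokens are independent of $d_t^{(i)}$ by the product structure. The uniforms $u_{t+1,j}$ are independent of every draft token.

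There is one subtlety. The acceptance probabilities $\alpha_j$ in call $t+1$ depend on $p_{t+1}^{(j)}=p_\theta(\cdot\mid P_t,d_t^{(<j)})$. This is a deterministic function of $P_t$ and the earlier draft tokens, so it introduces no dependence on $d_t^{(i)}$. In particular, for $j<i$ it never involves $d_t^{(i)}$, because the AR verifier uses causal conditioning. Hence, for any bounded $f$,
\[
\E\bigl[f(d_t^{(i)})\mid \mathcal G_{t,i}\bigr]=\E\bigl[f(d_t^{(i)})\mid\mathcal G_t\bigr]=\sum_x f(x)\,Q_t^{(i)}(x).
\]
Taking $f=\mathds 1\{\cdot=x\}$ gives \cref{eq:draft-conditional}.

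Finally, I would address the qualifier ``reached by verification''. Conditional on $\mathcal G_{t,i}$, whether position $i$ is reached is a fixed event, determined by $u_{t+1,j}<\alpha_j$ for $j<i$. Conditioning on it therefore does not change the distribution of $d_t^{(i)}$. This is exactly why $\mathcal G_{t,i}$ includes the earlier uniforms.

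The main obstacle is making this conditional-independence argument airtight. Two points need care. First, the stored distribution $Q_t^{(i)}$ is computed from the \emph{old} prefix $P_{t-1}$ and is then re-indexed by the shift $a$. Both of these are $\mathcal G_t$-measurable, because $C_t\in\mathcal G_t$ fixes $a$. Second, no later step conditions on information that reveals $d_t^{(i)}$. The rejection test and correction draw at position $i$ come after the conditioning, so they do not. I would check this by writing the joint density of $(d_t^{(1:i)},u_{t+1,1:i-1})$ given $\mathcal G_t$ as a product and reading off the conditional.
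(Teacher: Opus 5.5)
Your proof is correct and follows essentially the same route as the paper. Given $\mathcal G_t$, the advance $a$ and the stored distributions $Q_t^{(j)}=q_t^{(a+j)}$ are fixed and the retained draft tokens are sampled independently from them; the extra information $(d_t^{(j)},u_{t+1,j})_{j<i}$ in $\mathcal G_{t,i}$ is then independent of $d_t^{(i)}$ given $\mathcal G_t$, and whether position $i$ is reached is $\mathcal G_{t,i}$-measurable. Your remark that each $\alpha_j$ with $j<i$ depends only on $P_t$, $Q_t^{(j)}$ and $d_t^{(\leq j)}$, together with the product-density check, simply spells out what the paper asserts in one line.
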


\begin{proof}
Condition on \(\mathcal G_t\).
The advance \(a\), the positions to retain, and the stored distributions \(Q_t^{(j)}=q_t^{(a+j)}\) are fixed.
Algorithm~\ref{alg:packed-rejection} samples the tokens in \(D_t\) independently from these distributions.
Thus, \(d_t^{(i)}\) has distribution \(Q_t^{(i)}\) and is independent of the earlier draft tokens given \(\mathcal G_t\).

By definition, \(\mathcal G_{t,i}\) adds only the earlier draft tokens and the random numbers used to test them to \(\mathcal G_t\).
These random numbers are independent of \((\mathcal G_t,D_t)\).
Conditioning on this additional information therefore leaves the distribution of \(d_t^{(i)}\) unchanged, proving \cref{eq:draft-conditional}.
Whether verification reaches position \(i\) is determined by \(\mathcal G_{t,i}\), so the same identity holds at every position that is checked.
\end{proof}

\begin{proof}[Proof of \cref{thm:sampling}]
Fix the prompt \(P\) and consider \cref{alg:packed-rejection} without EOS or length stopping.
Let \(X_k\) be its \(k\)-th output token after the prompt.
We prove by induction on \(m\) that
\[
\Pr(X_{1:m}=x_{1:m}\mid P)
=\prod_{k=1}^{m}p_\theta(x_k\mid P,x_{<k}).
\]
Since \(D_0=\varnothing\), the first call draws \(X_1\) directly from \(p_\theta(\cdot\mid P)\), proving the claim for \(m=1\).

Suppose the identity holds for \(m=k-1\), where \(k\geq2\).
Let \(t+1\) be the call that produces \(X_k\). Its position within that call is \(i=k-(|P_t|-|P|)\).
Thus, \(1\leq i\leq|D_t|+1\), with \(i=|D_t|+1\) denoting the bonus position.
Let \(\mathcal H_k\) collect \(t\), \(i\), \(\mathcal G_t\), the earlier draft tokens, and the random numbers used to test them:
\[
\mathcal H_k:=\left(t,i,\mathcal G_t,(d_t^{(j)},u_{t+1,j})_{1\leq j<i}\right).
\]
The first \(i-1\) draft tokens have been accepted, so
\[
(P,X_{<k})=(P_t,d_t^{(<i)}),\qquad
p_{t+1}^{(i)}=p_\theta(\cdot\mid P,X_{<k}).
\]

First suppose \(i\leq|D_t|\), so the algorithm checks a draft token.
For fixed \(t\) and \(i\), \(\mathcal G_{t,i}\) determines whether the preceding draft tokens are accepted and whether \(|P_t|-|P|+i=k\).
Hence \cref{lem:suffix} gives
\(\Pr(d_t^{(i)}=x\mid\mathcal H_k)=Q_t^{(i)}(x)\).
Applying \cref{lem:rs} with target \(p_{t+1}^{(i)}\) and proposal \(Q_t^{(i)}\) yields
\[
\Pr(X_k=x\mid\mathcal H_k)=p_\theta(x\mid P,X_{<k}).
\]
If \(i=|D_t|+1\), the algorithm draws a bonus token directly from \(p_{t+1}^{(i)}\), so the same equality holds.

Since \((P,X_{<k})\) is determined by \(\mathcal H_k\), the law of total expectation gives
\[
\begin{aligned}
\Pr(X_k=x\mid P,X_{<k})
&=\mathbb E\!\left[\Pr(X_k=x\mid\mathcal H_k)\mid P,X_{<k}\right]\\
&=p_\theta(x\mid P,X_{<k}).
\end{aligned}
\]
Multiplying by \(\Pr(X_{<k}=x_{<k}\mid P)\) and applying the induction hypothesis proves the identity for \(m=k\).

Now restore stopping. Let
\[
\tau:=\min\left(\{1\leq k\leq L:X_k=\mathrm{EOS}\}\cup\{L\}\right).
\]
Using the same random numbers, the stopped algorithm produces the same tokens through position \(\tau\) and returns \((P,X_{1:\tau})\).
Taking \(m=L\) in the identity above shows that \(X_{1:L}\) has the AR distribution.
Applying the same stopping rule therefore gives the same distribution of returned continuations.
\end{proof}

\paragraph{Greedy decoding.}
For greedy decoding, let each model assign probability one to the argmax token chosen by its fixed tie-breaking rule, and zero to all other tokens.
Then \cref{thm:sampling} gives the greedy correctness of \cref{alg:packed}, including EOS and length stopping.

\FloatBarrier
\begingroup
\raggedbottom
\setlength{\intextsep}{8pt plus 2pt minus 1pt}
\section{Any-Order Extension}
\label{app:any-order}

\label{app:any-order-construction}

A later token can be easier to predict than an earlier one.
In LTR verification, the first mismatch ends acceptance for that call, so an uncertain early token can prevent later, easier tokens from being accepted.
Checking tokens that are more likely to be accepted first may therefore let a call accept more draft tokens.

An any-order AR verifier enables this change: it conditions on tokens earlier in the chosen verification order, even when they occur later in the output sequence.
For a permutation \(\sigma\), write \(x_{\sigma(<r)}=(x_{\sigma(1)},\ldots,x_{\sigma(r-1)})\) for the tokens earlier in that order. The model defines
\[
p_\theta^\sigma(x_{1:L}\mid P)
=\prod_{r=1}^{L}p_\theta(x_{\sigma(r)}\mid P,x_{\sigma(<r)};\sigma).
\]
The order specifies the conditionals, so different orders can produce different greedy outputs.

Self-speculation with separate draft and verification passes already appears in \emph{Any-Subset Speculative Decoding} (ASSD) \citep{guo2026assd}.
ASSD uses the same model to draft tokens in parallel and verify them in a second pass, matching the two-call schedule we call \linear{}.
Prism also describes self-speculative decoding and, separately, choosing the next position by confidence \citep[Chapter~5]{li2025prism}.
Here we extend the packed schedule to such verification orders.

\paragraph{Procedure.}
In LTR decoding, accepted tokens form a prefix, and all unresolved positions lie after it.
With any-order verification, a later position can be accepted while an earlier one remains unresolved.
We therefore keep each accepted or draft token associated with its output position, numbered from 1 after the prompt.
At the start of iteration \(t\), \(P_{t-1}\) contains the prompt and accepted position--token pairs in acceptance order, and \(D_{t-1}\) maps positions to available draft tokens.
Position superscripts now refer to these fixed output positions rather than offsets after a prefix.

Each iteration starts its active window at the leftmost unresolved position and includes up to \(W\) consecutive output positions.
Positions whose tokens have already been accepted are skipped, leaving the set \(U_t\) of \(u_t=|U_t|\) unresolved positions to process.
Some positions in \(U_t\) may have no draft token yet; in particular, \(D_0=\varnothing\) in the first iteration.

The order policy then chooses a permutation \(\pi_t\) of \(U_t\) using information available before the model call, and this order remains fixed throughout the call.
Here, \(\pi_t(r)\) is the output position checked at rank \(r\); changing the order does not move a draft token to a different output position.
For example, confidence ordering checks positions in decreasing order of scores computed in an earlier iteration (see \cref{app:ordering-details} for the scores used in our experiments).

Verification follows \(\pi_t\), using draft tokens at earlier ranks as context.
Write \(\mathrm{dom}(D_{t-1})\) for the positions with available draft tokens, and let \(\kappa_t\) be the number of consecutive entries of \(\pi_t\) in this set before the first missing draft token.
A \emph{same-position} verifier predicts a token at each queried output position.
Writing \(p_{\theta,j}\) for its distribution at position \(j\) under the chosen order, its prediction at rank \(r\) is
\begin{equation}
v_t^{(\pi_t(r))}=\arg\max_x p_{\theta,\pi_t(r)}
\bigl(x\mid P_{t-1},(\pi_t(s),d_{t-1}^{(\pi_t(s))})_{s<r}\bigr),
\label{eq:any-verifier}
\end{equation}
for \(r=1,\ldots,\min(\kappa_t+1,u_t)\).
Accept matching draft tokens in this order until the first mismatch or missing draft token, then accept the verifier's prediction at that position.
If all \(u_t\) positions have draft tokens and all match, accept those tokens without an extra token outside the window.
Thus, if \(h_t\) draft tokens match consecutively, the number accepted before applying the stopping rule is \(A_t=\min(h_t+1,u_t)\).

In the same call as verification, \method{} drafts every position in \(U_t\) from \(P_{t-1}\).
After acceptance, predictions at positions that remain unresolved become the next draft; predictions at newly accepted positions are discarded.
For \quadratic{}, the proposal branches follow \(\pi_t\) instead of textual order, and the branch selected after verification supplies the next draft at the remaining unresolved positions.

\Cref{alg:setdlm-order} gives the formal procedure for \method{} with any-order greedy decoding.
Let \(E\) be the last output position to generate. It starts at the length limit \(L\); accepting EOS at position \(j\) changes it to \(\min(E,j)\).
The algorithm fills all remaining unresolved positions up to \(E\) before returning.
For an example with confidence-ordered verification on \setdlm{}, see \cref{app:setdlm-conf}.

In \cref{alg:setdlm-order}, \(\mathrm{dom}(P_t)\) denotes the output positions with accepted tokens in \(P_t\), excluding the prompt, and \(P_t(j)\) is the token stored at position \(j\).
The restriction \(\left.\cdot\right|_{\leq E}\) removes output pairs beyond \(E\), preserving the prompt and the order of the remaining pairs.
The completed sequence is returned with the prompt included and its continuation restored to textual order.

\begin{algorithm}[H]
\caption{\method{} with any-order greedy decoding}
\label{alg:setdlm-order}
\small
\begin{algorithmic}[1]
\REQUIRE prompt \(P\), speculative width \(W\geq1\), output length limit \(L>0\), verifier \(p_\theta\), drafter \(q_\theta\), order policy
\STATE \(P_0\leftarrow P\), \ \(D_0\leftarrow\varnothing\), \ \(E\leftarrow L\), \ \(t\leftarrow1\)
\WHILE{\(\{1,\ldots,E\}\nsubseteq\mathrm{dom}(P_{t-1})\)}
\STATE \(\ell\leftarrow\min\bigl(\{1,\ldots,E\}\setminus\mathrm{dom}(P_{t-1})\bigr)\) \COMMENT{Leftmost unresolved position}
\STATE \(U_t\leftarrow\{\ell,\ldots,\min(E,\ell+W-1)\}\setminus\mathrm{dom}(P_{t-1})\), \ \(u_t\leftarrow|U_t|\) \COMMENT{Active positions}
\STATE Choose \(\pi_t\) over \(U_t\) using the order policy and information available before this call
\STATE \(\kappa_t\leftarrow\max\{0\leq k\leq u_t:\pi_t(r)\in\mathrm{dom}(D_{t-1})\text{ for all }r\leq k\}\) \COMMENT{Available draft prefix}
\item[] \textbf{In one model call:}
\begin{ALC@g}
\STATE Compute \(v_t^{(\pi_t(r))}\) by \cref{eq:any-verifier} for \(r=1,\ldots,\min(\kappa_t+1,u_t)\) \COMMENT{Verifier predictions}
\STATE \(z_t^{(j)}\leftarrow\arg\max_x q_\theta^{(j)}(x\mid P_{t-1})\) for \(j\in U_t\) \COMMENT{Drafter proposals}
\end{ALC@g}
\item[] \textbf{end}
\STATE \(h_t\leftarrow\max\{0\leq h\leq\kappa_t:d_{t-1}^{(\pi_t(r))}=v_t^{(\pi_t(r))}\text{ for all }r\leq h\}\) \COMMENT{Count matches}
\STATE \(A_t\leftarrow\min(h_t+1,u_t)\), \ \(C_t\leftarrow\bigl((\pi_t(r),v_t^{(\pi_t(r))})\bigr)_{r=1}^{A_t}\) \COMMENT{Collect accepted tokens}
\STATE \(E\leftarrow\min\bigl(\{E\}\cup\{j:(j,\mathrm{EOS})\in C_t\}\bigr)\) \COMMENT{Ending condition}
\STATE \(C_t\leftarrow\left.C_t\right|_{\leq E}\), \ \(P_t\leftarrow\left.P_{t-1}\right|_{\leq E}\) \COMMENT{Discard positions after \(E\)}
\STATE \(P_t\leftarrow(P_t,C_t)\) \COMMENT{Append accepted tokens}
\STATE \(J_t\leftarrow\{j\in U_t\setminus\mathrm{dom}(P_t):j\leq E\}\) \COMMENT{Retained draft positions}
\STATE \(D_t\leftarrow\{j\mapsto z_t^{(j)}:j\in J_t\}\) \COMMENT{Prepare next draft}
\STATE \(t\leftarrow t+1\)
\ENDWHILE
\STATE \textbf{Output:} \((P,P_{t-1}(1),\ldots,P_{t-1}(E))\) \COMMENT{Textual order}
\end{algorithmic}
\end{algorithm}

\begin{proposition}[Greedy correctness under the chosen verification orders]
\label{thm:correctness-order}
Fix a run of \cref{alg:setdlm-order}.
Consider a \emph{serial reference} that starts from the same prompt and, in iteration \(t\), generates \(A_t\) tokens greedily at positions \(\pi_t(1),\ldots,\pi_t(A_t)\), conditioning each prediction on its own \(P_{t-1}\) and the tokens generated earlier in that group.
The reference uses \cref{alg:setdlm-order}'s verifier, tie-breaking rule, and rules for updating the accepted tokens and handling EOS.
Then the two procedures return the same output sequence.
\end{proposition}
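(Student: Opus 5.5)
Fix the run of \cref{alg:setdlm-order}, together with its iterations \(t=1,\ldots,T\), the windows \(U_t\), the orders \(\pi_t\) and the counts \(A_t\). The serial reference is driven by exactly these choices. The plan is an induction on \(t\) showing that both procedures hold the same state after every iteration: the same accepted position--token map \(P_t\) (including acceptance order and restriction to \(\leq E\)) and the same end position \(E\). Both run the same number of iterations, stop under the same while-condition, and restore textual order the same way, so equal final states give equal outputs. The base case is immediate, since both start from \(P_0=P\) and \(E=L\).

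For the inductive step, assume the states agree at the start of iteration \(t\). Let \(C_t^{\mathrm{ref}}\) be the reference group, with tokens \(g_r\) at positions \(\pi_t(r)\) for \(r=1,\ldots,A_t\). We show by an inner induction on \(r\leq A_t\) that the \(r\)-th accepted pair in \(C_t\) is \((\pi_t(r),g_r)\). Both procedures use the same positions \(\pi_t(r)\), so only the tokens need to match.

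\emph{Ranks \(r\leq h_t\).} The algorithm accepts \(d_{t-1}^{(\pi_t(r))}\) because it equals the verifier prediction \(v_t^{(\pi_t(r))}\) in \cref{eq:any-verifier}. That prediction conditions on \(P_{t-1}\) and the draft tokens at ranks \(s<r\). By the inner hypothesis these draft tokens equal \(g_s\). They also match the algorithm's accepted tokens, because ranks \(s<r\leq h_t\) are matches. So the conditioning context is exactly the reference's context, and the same verifier and tie-breaking give \(v_t^{(\pi_t(r))}=g_r\).

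\emph{Rank \(r=A_t\).} Here the algorithm emits \(v_t^{(\pi_t(A_t))}\), covering a mismatch, a missing draft token, or full acceptance with \(A_t=u_t\). This prediction is computed with \(r=h_t+1\leq\min(\kappa_t+1,u_t)\), and its context is the draft tokens at ranks \(<A_t\), which are the matched ones. The same argument therefore gives equality with \(g_{A_t}\). The case \(h_t=u_t\) needs a check: \(A_t=u_t\), no extra token is taken, and every accepted token is a matched draft equal to the verifier prediction. This case follows directly from the ranks-\(\leq h_t\) argument.

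With \(C_t=C_t^{\mathrm{ref}}\) established, the EOS update of \(E\), the restriction to \(\leq E\), and the append step are deterministic functions of \((P_{t-1},E,C_t)\) shared by both procedures. The outer states therefore agree after iteration \(t\). The drafter and the retained draft \(D_t\) affect only \(h_t\), which is fixed by the run, and the tokens they contribute are always certified by a verifier prediction in the reference context. The drafter is thus irrelevant to the output.

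The main obstacle is bookkeeping rather than mathematics. We must argue carefully that the verifier context at rank \(r\) (prompt, \(P_{t-1}\), and earlier-rank draft pairs under order \(\pi_t\)) coincides exactly with the reference context, including the position labels given to the any-order model. We also need that \(\pi_t\) and \(A_t\) are fixed by the run, so the reference is well defined even though \(\pi_t\) depended on earlier drafter scores.
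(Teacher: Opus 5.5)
Your proof is correct and follows the paper's argument: an outer induction on iterations shows that $P_t$ and $E$ agree, and an inner induction over verification ranks uses that every rank $s<r\le A_t=\min(h_t+1,u_t)$ satisfies $s\le h_t$, so the verifier's conditioning context equals the reference's. The only difference is that the paper ends with an explicit termination argument (each iteration resolves at least one previously unresolved position at or before $E$, and $E$ never increases), whereas you assume termination by fixing a run with iterations $t=1,\ldots,T$; conversely, your check that the rank-$A_t$ prediction is actually computed, because $h_t\le\kappa_t$, is a detail the paper leaves implicit.
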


\begin{proof}[Proof of \cref{thm:correctness-order}]
We show by induction that the two procedures have the same \(P_t\) and \(E\) after every iteration.
The claim holds initially, since both start from the prompt \(P\) with \(E=L\).

Suppose the claim holds before iteration \(t\).
Let \(y_r\) be the serial reference's token at position \(\pi_t(r)\) in this iteration.
We prove, in verification order, that
\[
y_r=v_t^{(\pi_t(r))},\qquad r=1,\ldots,A_t.
\]
At rank 1, both predictions condition only on \(P_{t-1}\), so they agree.
Now take \(r>1\) and suppose agreement holds at all earlier ranks.
Since \(r\leq A_t=\min(h_t+1,u_t)\), every \(s<r\) satisfies \(s\leq h_t\).
These earlier draft tokens match their verifier predictions by the definition of \(h_t\).
Thus,
\[
d_{t-1}^{(\pi_t(s))}=v_t^{(\pi_t(s))}=y_s,
\qquad s<r.
\]
Substituting these equalities into \cref{eq:any-verifier} gives
\[
\begin{aligned}
y_r
&=\arg\max_x p_{\theta,\pi_t(r)}
\bigl(x\mid P_{t-1},(\pi_t(s),y_s)_{s<r}\bigr)\\
&=v_t^{(\pi_t(r))}.
\end{aligned}
\]
This proves agreement at every rank up to \(A_t\), including the final verifier token when \(h_t<u_t\).

The two procedures therefore generate the same \(A_t\) position--token pairs before applying the stopping rule.
They find EOS at the same positions, update \(E\) identically, and remove the same pairs beyond \(E\).
The resulting values of \(P_t\) agree, completing the induction step.

Each iteration accepts at least one token at a previously unresolved position at or before the updated \(E\), while \(E\) never increases.
Hence the number of unresolved positions up to \(E\) strictly decreases, and both procedures terminate.
They return the same tokens in the same output-position order.
\end{proof}

\begin{remark}[Sampling under a specified generation order]
For a fixed generation order \(\sigma\), the argument in \cref{app:sampling} gives samples from \(p_\theta^\sigma\).
Choose the order before sampling the draft tokens, for example using proposal entropy or maximum proposal probability.
Sorting by the probabilities of sampled draft tokens can change their distributions when checked.
\end{remark}

\FloatBarrier
\FloatBarrier
\section{Attention Mask Layouts}
\label{app:implementation}
\label{app:boundaries}

\method{} combines verification of the current draft and generation of the next proposal in one model call.
This is achieved by a custom attention mask.
Below we show the packed inputs, masks, and cache updates for Nemotron and \setdlm{}.

\subsection{Nemotron: sequence packing and padding}
\label{app:nemotron-layout}

Let \(n_{t-1}=|P_{t-1}|\). Each call after initialization processes the last token of the verified prefix, \(P_{t-1}^{(n_{t-1})}\), followed by \(B-1\) draft slots; earlier prefix tokens are read from the cache. The first-call is slightly different and discussed in \cref{app:initialization}.

\paragraph{Physical layout.}
Let \(\bar D_{t-1}\) be \(D_{t-1}\) followed by enough masks to occupy \(B-1\) slots.
The \method{} input is
\[
[P_{t-1}^{(n_{t-1})},\bar D_{t-1}\mid P_{t-1}^{(n_{t-1})},\masktok^{B-1}],
\]
concatenated along the sequence axis. Earlier tokens are stored in KV cache and are not part of the input.
Its shape is \([1,2B]\).

Let \(T_B\) be the \(B\times B\) lower-triangular Boolean matrix including its diagonal, and \(J_B\) the \(B\times B\) all-ones matrix.
The following matrix specifies which rows in the current input may attend to each other:
\[
\begin{pmatrix}T_B&0\\0&J_B\end{pmatrix},
\]
and every row additionally sees the cached prefix \(P_{t-1}^{(<n_{t-1})}\).
The causal region yields the shifted next-token verifier predictions \citep{fu2026nemotron}.
The proposal region contains another copy of the last prefix token, whose output is unused, and \(B-1\) mask queries, whose outputs are the entries of \(Z_t\).
Padded verifier rows are evaluated but cannot extend acceptance, since verification stops at the first absent draft token, represented physically by a mask.

\paragraph{Cache update.}
After each call, the cache contains the verified prefix except its last token, which is processed in the next call.
Keys and values from the proposal region and unaccepted verifier inputs are discarded.

\FloatBarrier
\subsection{SetDLM: position-aligned queries and ordered context}
\label{app:setdlm-layout}

\paragraph{Duplicate masks.}
\setdlm{} predicts a token at the position of its input mask.
Its training construction represents each position twice: once with the token and once with a mask, both using the same position ID \citep[Section~5.3 and Appendix~K]{arriola2026setdiffusion}.
The mask can attend to tokens earlier in the generation order but cannot see its own target token or later tokens, allowing all conditional predictions to be computed in one pass.
We use the same construction for verification, placing draft tokens in the token entries and comparing them with predictions from the corresponding masks.
\method{} adds a second mask at each position to generate the next draft from the prompt and accepted tokens.

\paragraph{Packed input and attention.}
Let \(U_t\) contain the unresolved positions in the active window, and write \(u_t=|U_t|\).
Let \(N_t\) contain the correction token, if any, that has not yet been processed as an input.
Let \(O_t\) contain the previous draft \(D_{t-1}\), with masks for missing entries; \(M_t\) contain masks for verifier predictions; and \(R_t\) contain masks for new draft predictions.
The input is
\[
[N_t\mid O_t\mid M_t\mid R_t],
\qquad
|N_t|=p\in\{0,1\},\quad |O_t|=|M_t|=|R_t|=u_t.
\]
All regions attend to the cached keys and values of the prompt and accepted tokens.
The regions \(O_t,M_t,R_t\) also attend to \(N_t\), if present, and the row in \(N_t\) attends to itself.

Write \(O_r,M_r,R_r\) for the rows of \(O_t,M_t,R_t\) at verification rank \(r\).
Let \(\mathcal K_r\) contain the earlier ranks \(s<r\) in the order \(\pi_t\) that have a stored draft token.
Using \(X\leftarrow Y\) to mean that row \(X\) may attend to the rows in \(Y\), the remaining attention connections are
\[
O_r\leftarrow\{O_s:s\in\mathcal K_r\}\cup\{O_r\},
\qquad
M_r\leftarrow\{O_s:s\in\mathcal K_r\}\cup\{M_r\},
\qquad
R_r\leftarrow\{R_r\}.
\]
The three rows at rank \(r\) share the textual position ID \(\pi_t(r)\).
The physical call evaluates all \(u_t\) verifier queries, while the logical algorithm needs predictions only through the first missing draft token or disagreement.

\paragraph{Cache update.}
\label{app:any-order-stopping}
Only keys and values from \(N_t\) and accepted draft tokens in \(O_t\) are added to the cache.
A mask's keys and values do not represent its predicted token, so a correction is processed in the next call through \(N_{t+1}\).

\subsection{Input-size comparison}
\label{app:rows}

\Cref{tab:structures} compares the input sizes of these layouts with those of \linear{} and \quadratic{}.

\begin{table}[htbp]
  \caption{\textbf{Execution cost per iteration with cached prompt context.}
  Evaluated uncached rows count token representations processed by the transformer, including duplicates and evaluated padding, summed across calls.
  Nemotron uses block size \(B=W+1\). In \setdlm{}, \(u_t\leq W\) positions remain unresolved in the active window, and \(p=|N_t|\in\{0,1\}\) counts the correction token processed in this call; maximum row counts take \(u_t=W\) and \(p=1\).
  Prompt rows are processed once and depend on prompt length.}
  \label{tab:structures}
  \centering\small
  \setlength{\tabcolsep}{4pt}
  \begin{tabular}{lccrrc}
    \toprule
    & & \multicolumn{4}{c}{Evaluated uncached rows per iteration}\\
    \cmidrule(lr){3-6}
    & & & \multicolumn{2}{c}{\setdlm{} maximum} & \\
    \cmidrule(lr){4-5}
    Method & NFE/iteration & Nemotron & \(W=8\) & \(W=16\) & \setdlm{} general\\
    \midrule
    \linear{}    & 2 & \(2B\)      & 25 & 49  & \(p+3u_t\)\\
    \quadratic{} & 1 & \(B(B+1)\)  & 81 & 289 & \(p+2u_t+u_t^2\)\\
    \method{} (Ours) & 1 & \(2B\)  & 25 & 49  & \(p+3u_t\)\\
    \bottomrule
  \end{tabular}
\end{table}

For \setdlm{}, \linear{} processes \(p+u_t\) rows for drafting and \(2u_t\) for verification, caching \(N_t\) between calls.
\method{} processes the same total in one call.
\quadratic{} uses \(p+2u_t\) rows for verification and \(u_t\) proposal regions of \(u_t\) rows each.

\FloatBarrier

\section{Experimental Details}
\label{app:experimental-details}

\subsection{Hardware, models, and tasks}
All runs use batch size one on a single A100 80GB GPU.
Nemotron is evaluated in bfloat16 (BF16), and \setdlm{} in float32 (FP32). Nemotron uses left-to-right AR verification in every main setting.
Both scales run the same four tasks---HumanEval, MBPP, MATH-500, and GSM8K \citep{chen2021humaneval,austin2021mbpp,hendrycks2021math,cobbe2021gsm8k}---under greedy decoding and under rejection sampling.
Each cell draws on a fixed prompt set: 164 HumanEval problems, 378 MBPP problems, 500 MATH-500 problems, and all 1,319 GSM8K test questions, and these counts are the number of examples behind every mean and interval reported for that cell.
Both scales additionally run all 555 English WritingBench prompts \citep{wu2025writingbench} at both temperatures.
The \setdlm{} experiments allocate 1,024 output positions and use active windows of size 8 and 16; because its two windows come from separately adapted checkpoints, comparing them is not a same-weights window-size ablation.
At both temperatures, \linear{} and \method{} use Nemotron block size \(B=32\), while \quadratic{} uses \(B=8\), with speculative width \(W=B-1\).

\paragraph{Training horizon versus inference window.}
\label{app:smax32-semantics}

The \setdlm{} training construction uses a desired prediction budget \(S\in\{4,8,16\}\) and permits schedules reaching \(2S\) future positions \citep{arriola2026setdiffusion}.
Checkpoints are named by their maximum horizon.
Hence smax32 is the \(S=16\), maximum-horizon-32 model, and its matched active evaluation window is \(W=16\) rather than 32.

\subsection{Initialization and stopping}
\label{app:initialization}

\paragraph{Initial calls.}
In our Nemotron implementation, the drafting part of a regular call's input contains the last accepted token followed by \(W\) masks.
During initialization, the drafting part contains \(W+1\) masks, giving one extra draft position.
For \setdlm{}, \method{} builds the prompt cache in one counted call before entering the empty-draft loop of \cref{alg:setdlm-order}.

At temperature 1, our implementation first processes and caches the prompt, then samples the first output token from its last prediction. A second call proposes draft tokens conditioned on the prompt and this first output token.
Both calls count toward NFE and supply a full draft before verification, whereas \cref{alg:packed-rejection} starts from an empty draft.

\paragraph{Output caps.}
Generations are capped at 512 tokens for code and writing and 1,024 tokens for mathematics.

\paragraph{SetDLM stop sequences.}
An end marker in the \setdlm{} evaluation can consist of several tokens.
We recognize it only after all of those tokens have been accepted.
We set \(E\), the last output position, to the marker's final position and fill any earlier unresolved positions before returning.
If several markers are recognized, we use the one that ends earliest in the output.

\subsection{Prompts and scoring}
Nemotron uses zero-shot prompts: function completion for HumanEval, visible original test cases for MBPP, and requests for boxed final answers on the mathematics tasks.
MBPP uses the 378-problem EvalPlus subset and its original base tests.
MATH-500 uses boxed-answer exact match, and \setdlm{} GSM8K scoring compares normalized final answers.
The code and mathematics runs use the same prompts and graders at both temperatures.

For temperature-1 WritingBench runs, each complete query uses the checkpoint's user-message chat template with thinking disabled; prompts are not truncated.
Scored-task tables report accuracy or pass@1 at temperature 0 and mean per-response correctness at temperature 1.
WritingBench measures speed only, without judge-based scoring.

\subsection{Sampling and repetitions}
\label{app:sampling-protocol}

At temperature 1, rejection sampling can produce different generation trajectories for the same prompt, varying output length and decoding cost.
To reduce this source of variance on the code tasks, each method draws 20 responses per HumanEval prompt and 7 per MBPP prompt.
MATH-500, GSM8K, and WritingBench use one response per prompt and method.
Sampling uses float32 probability normalization and the full vocabulary, without top-\(k\) or top-\(p\) filtering.
For \quadratic{}, the verification result selects the branch matching the accepted draft prefix before its next draft tokens are sampled.

\subsection{Timing and confidence intervals}
Decoder wall time is CUDA-synchronized and includes prefill and all decoding work, excluding model loading, prompt preparation, tokenization, output-text decoding, and scoring.
Each model forward pass counts as one NFE, including prompt processing.
Generated-token counts include stop and special tokens and any generated reasoning text.

We compute 95\% percentile confidence intervals from 20,000 bootstrap resamples within each checkpoint, task, decoding mode, and verification order.
Each resample draws prompt indices with replacement, uses the same indices for all methods, and recomputes the reported statistics.
We bootstrap over prompts, using all repeated responses for each sampled prompt.
The intervals describe variation across prompts rather than timing noise for a fixed generation, and are not adjusted for multiple comparisons.

\subsection{SetDLM decoding settings}

\paragraph{Confidence scores.}
\label{app:ordering-details}
For confidence ordering, \method{} and \quadratic{} score each unresolved position by the probability assigned to its draft token when that token was proposed; \quadratic{} uses the selected proposal branch.
\linear{} uses the probability of the verifier's most likely token at each position in the previous iteration.
Positions are ranked by decreasing score, with ties broken by textual position.
Positions without a draft token or a finite score come last in textual order, so the first iteration uses LTR order.
Each order is fixed before the next model call; for \linear{}, this is the drafting call.

\paragraph{Native baseline.}
\label{app:native-baseline}
Native \setdlm{} uses the released sliding-window decoder with argmax token selection and confidence threshold \(0.9\).
It uses the same GSM8K prompts, number of output positions, window sizes, and stopping convention as the speculative schedules.

\FloatBarrier
\FloatBarrier
\section{Additional Experiments and Tables}
\label{app:records}

This appendix collects additional experiments, absolute measurements, and numerical values supporting the main figures.

\subsection{Nemotron block size and execution cost}
\label{app:width}

To select the block size for each schedule, we run a small probe on Nemotron 8B using the first ten GSM8K test questions (\cref{tab:width-probe}).
We choose the size with the highest token throughput (tokens/s) among those tested.
All runs use a total output limit of 512 new tokens.

\begin{table}[H]
  \caption{\textbf{Nemotron 8B GSM8K block-size probe.}
  Each row uses the same ten questions. Tokens/s is calculated from the displayed means. Wall is mean total decoder time per response, including prompt processing; total NFE and output-token counts are also per-response means. Accuracy is reported for comparison. Rows/call is the maximum number of input rows per call after initialization.}
  \label{tab:width-probe}
  \centering
  \small
  \setlength{\tabcolsep}{5pt}
  \begin{tabular}{lrrrrrrr}
    \toprule
    Method & \(B\) & Tokens/s & Rows/call & Wall (s) & Total NFE & Tokens & Accuracy (\%) \\
    \midrule
    \linear{} & 8 & 94.9 & 8 & 2.824 & 98.2 & 267.9 & 100 \\
    \linear{} & 16 & 113.4 & 16 & 2.448 & 84.6 & 277.5 & 100 \\
    \linear{} & 32 & 121.9 & 32 & 2.237 & 77.4 & 272.8 & 100 \\
    \method{} & 8 & 94.8 & 16 & 2.798 & 89.0 & 265.2 & 100 \\
    \method{} & 16 & 123.5 & 32 & 2.315 & 73.3 & 285.9 & 100 \\
    \method{} & 32 & 128.5 & 64 & 2.333 & 75.4 & 299.7 & 90 \\
    \quadratic{} & 8 & 122.3 & 72 & 2.232 & 64.9 & 272.9 & 100 \\
    \quadratic{} & 16 & 84.6 & 272 & 3.246 & 57.6 & 274.7 & 100 \\
    \quadratic{} & 32 & 25.4 & 1056 & 12.398 & 64.1 & 314.6 & 80 \\
    \bottomrule
  \end{tabular}
\end{table}

\FloatBarrier

\subsection{Proposal-only LoRA accounting}
\label{app:lora-details}

We evaluate the proposal-only low-rank adapter trained by the Nemotron authors \citep{fu2026nemotron} on Nemotron 8B and all GSM8K test questions, changing diffusion drafting while preserving the AR verifier.
The base and LoRA measurements in this study use a total output cap of 512 new tokens, including thinking and answer tokens.
This cap differs from the 1,024-token GSM8K setting in \cref{sec:experiments}, so absolute timings are not directly comparable with \cref{tab:nemotron}.
\method{} reduces NFE by 12.0\% relative to \linear{} but takes 1.749 rather than 1.698 seconds per example, a 3.0\% slowdown (\cref{tab:lora}).

\begin{table}[H]
  \caption{\textbf{Nemotron 8B GSM8K with proposal-only LoRA (\(n=1{,}319\)).}
  Means; model execution time and non-model decoder time sum to inclusive decoder wall time. Emitted length and throughput accompany latency, and accuracy is reported for comparison.}
  \label{tab:lora}
  \centering\footnotesize\setlength{\tabcolsep}{2.7pt}
  \begin{tabular}{lrrrrrrr}
    \toprule
    Method & Tokens & Total NFE & Model (s) & Non-model (s) & Wall (s) & Tokens/s & Accuracy (\%)\\
    \midrule
    \linear{}    & 272.2 & 55.04 & 1.561 & 0.137 & \textbf{1.698} & \textbf{160.32} & 85.90\\
    \quadratic{} & 273.3 & 54.44 & 1.781 & 0.343 & 2.123 & 128.72 & 86.35\\
    \method{}    & 274.2 & \textbf{48.44} & 1.570 & 0.179 & 1.749 & 156.78 & 85.82\\
    \bottomrule
  \end{tabular}
\end{table}

In calls with the adapter enabled, the implementation computes an update for every input row but applies it only to proposal rows.
Packed therefore computes 64 updates but applies only 32, whereas Linear places its 32 proposal rows in a separate call and disables the adapter for its 32-row verifier.
Quadratic applies updates to 64 proposal rows but computes them for all 72 input rows.

\begin{table}[H]
  \caption{\textbf{Nemotron 8B GSM8K model execution time per iteration.}
  Linear's iteration contains both calls.
  “Selected” counts rows receiving adapter updates; “evaluated” counts rows for which those updates are computed.}
  \label{tab:lora-iteration}
  \centering
  \small
  \setlength{\tabcolsep}{3.4pt}
  \begin{tabular}{lrrrrrr}
    \toprule
    Method & Query rows & Selected & Evaluated & Base ms & LoRA ms & Increase \\
    \midrule
    \linear{}    & \(32+32\) & 32 & 32 & 54.41 & 56.77 & 4.3\% \\
    \quadratic{} & 72        & 64 & 72 & 28.90 & 32.69 & 13.1\% \\
    \method{}    & 64        & 32 & 64 & 28.58 & 32.37 & 13.3\% \\
    \bottomrule
  \end{tabular}
\end{table}

With LoRA, mean model execution time is nearly equal for \method{} and \linear{} (1.570 versus 1.561 seconds).
After the first call, \method{} averages 47.44 iterations versus \linear{}'s 27.02, but its model time per iteration is 32.37 versus 56.77 milliseconds, so these differences almost cancel.
Non-model decoder time adds 0.179 seconds for \method{} against 0.137 for \linear{}, and the \method{}-minus-\linear{} wall-time difference is \(0.0513\) seconds with paired 95\% interval \([0.0243,0.0788]\).
Token throughput is also lower: \method{} emits 274.2 tokens on average against \linear{}'s 272.2, and its throughput ratio is \(0.978\) with paired interval \([0.969,0.987]\).
Shared, densely fine-tuned weights for drafting and verification would eliminate these extra LoRA computations and could reduce the latency penalty.
The Nemotron authors have not released such a checkpoint.

\FloatBarrier

\subsection{Model execution and non-model decoder time}
\label{app:time-breakdown}
To separate model evaluation from decoder overhead, we normalize each task's component times by its \linear{} total and then average tasks equally within each model scale, so that long-output tasks do not dominate the decomposition.
The decomposition covers GSM8K and MATH-500 only.
\begin{figure}[H]
\centering
\input{figures/nemotron_time_breakdown}
\caption{\textbf{Normalized Nemotron timing decomposition on GSM8K and MATH-500.}
Greedy decoding with a 1,024-token cap.
Each task's component times are divided by that task's \linear{} total, then the two tasks are averaged with equal weight within each model scale.
The dashed line marks \linear{}'s total.
Non-model decoder time covers draft token construction, selection, and state updates.}
\label{fig:nemotron-breakdown}
\end{figure}
\method{} cuts model execution time to 0.860 and 0.854 of \linear{}'s total at 3B and 8B while raising non-model decoder time by about 40\%, for totals of 0.964 and 0.951.
\quadratic{} cuts model execution time comparably, to 0.898 and 0.876, but its non-model decoder time is 2.9 and 2.7 times \linear{}'s, so its totals exceed one at 1.106 and 1.062.

\FloatBarrier
\FloatBarrier

\subsection{Absolute measurements}
\label{app:efficiency-summaries}

\input{tables/nemotron_absolute_greedy}
\input{tables/nemotron_absolute_sampling}
\input{tables/setdlm_absolute_ltr}
\FloatBarrier

\subsection{Confidence-ordered verification on \setdlm{}}
\label{app:setdlm-conf}

\input{tables/setdlm_absolute}

\begin{table}[H]\caption{\textbf{Paired \setdlm{} efficiency comparisons.} Wall-time differences are \method{} minus baseline; throughput ratios are \method{}/baseline. Brackets give paired 95\% confidence intervals.}\label{tab:setdlm-paired-ci}\centering\footnotesize\setlength{\tabcolsep}{3pt}
\begin{tabular}{llrr}\toprule Setting & Baseline & \(\Delta\) wall (s) [95\% CI] & Throughput ratio [95\% CI]\\\midrule
\(W=8\) LTR & \linear{} & \(-0.1529\) [\(-0.1588\), \(-0.1470\)] & 1.060 [1.058, 1.062]\\
 & \quadratic{} & \(-0.4248\) [\(-0.4377\), \(-0.4124\)] & 1.166 [1.164, 1.169]\\
\(W=8\) Confidence & \linear{} & \(-0.2581\) [\(-0.2953\), \(-0.2217\)] & 1.102 [1.098, 1.107]\\
 & \quadratic{} & \(-0.3521\) [\(-0.3836\), \(-0.3198\)] & 1.140 [1.136, 1.143]\\
\(W=16\) LTR & \linear{} & \(-0.0382\) [\(-0.0432\), \(-0.0330\)] & 1.015 [1.013, 1.017]\\
 & \quadratic{} & \(-3.6686\) [\(-3.7869\), \(-3.5590\)] & 2.411 [2.399, 2.424]\\
\(W=16\) Confidence & \linear{} & \(-0.2428\) [\(-0.2757\), \(-0.2091\)] & 1.128 [1.123, 1.134]\\
 & \quadratic{} & \(-2.7033\) [\(-2.7766\), \(-2.6335\)] & 2.196 [2.185, 2.207]\\
\bottomrule\end{tabular}\end{table}

Under confidence order, \method{} remains fastest at both windows (\cref{tab:setdlm-order}), reducing \linear{}'s wall time by 9.5\% at \(W=8\) and 9.7\% at \(W=16\) and its total NFE by 12.4\% and 16.5\%.
Its throughput gains over \linear{} are 10.2\% and 12.8\%, and at \(W=16\) it reaches \(2.20\times\) \quadratic{}'s throughput; all eight \setdlm{} paired intervals favour \method{} (\cref{tab:setdlm-paired-ci}).
Two effects should be kept apart, because only one of them is a cost.
The three LTR schedules have identical per-example correctness labels within each checkpoint, so \method{}'s LTR speedup incurs no observed accuracy cost.
Changing the order can affect accuracy.
Confidence ordering lowers \method{}'s latency by a further 4.1\% at \(W=8\) and 13.1\% at \(W=16\) relative to LTR, but accuracy falls from 68.39\% to 65.35\% and from 66.57\% to 59.06\%; the second drop is large enough to give up the advantage over \setdlm{}'s own threshold decoder, which scores 59.51\% there.
Adaptive position choices alter the greedy trajectory even in exact arithmetic, and single-step confidence is in any case a weak criterion for choosing which tokens to accept \citep{chen2026ccd}.
We therefore report the ordering as a speed--accuracy trade-off of the decoding policy rather than a property of the schedule: \method{} gives the lowest latency in each regime.

\FloatBarrier

\subsection{Accuracy and output trajectories}
\label{app:accuracy-records}

We report paired accuracy differences to assess variation in task scores 
At temperature 1, the schedules draw different random samples, so their continuations and measured accuracies can differ.
Under greedy decoding, small floating-point differences can also change token choices and subsequent continuations.
\Cref{tab:score-comparisons} reports the Nemotron comparisons in percentage points, with paired 95\% confidence intervals.

For \setdlm{}, the LTR schedules have identical correctness labels despite different outputs on 5 of 1,319 smax16 questions and 13 of 1,319 smax32 questions.
Their accuracy gains over Native are \(8.72\) points at \(W=8\) (95\% CI \([6.37,11.07]\)) and \(7.05\) points at \(W=16\) (95\% CI \([4.78,9.33]\)).

\input{tables/nemotron_score_comparisons}
\FloatBarrier

\subsection{Paired efficiency comparisons}
\label{app:paired-efficiency}

\Cref{tab:paired-ci,tab:sampling-ci} report \method{} minus baseline wall time and \method{}/baseline throughput, with 95\% paired intervals. Negative differences and ratios above one favor \method{}.

\input{tables/nemotron_pairwise_greedy}
\input{tables/nemotron_pairwise_sampling}

All ten paired throughput intervals against \linear{} lie above one (\cref{tab:sampling-ci}), supporting a throughput advantage in the measured settings.
Against the \quadratic{} extension, three intervals lie above one, two lie below it, and five include it; the last group is inconclusive rather than evidence of equivalence.
\FloatBarrier

\paragraph{Reference schedules.}
At temperature 0, \quadratic{} has lower throughput than \linear{} on all eight code and mathematics settings but higher throughput on WritingBench; at temperature 1, it has higher throughput on seven of the eight code and mathematics settings.
Shorter accepted prefixes require more iterations, so eliminating a second call can outweigh the cost of its fixed 72-row pass (\cref{app:time-breakdown}).
On \setdlm{}, \linear{} and \method{} are faster than Native at both windows; \quadratic{} is faster only at \(W=8\) (\cref{tab:setdlm-length-details}).
\FloatBarrier
\FloatBarrier
\endgroup

%% file: figures/nemotron_time_breakdown.tex
\begin{tikzpicture}
\begin{axis}[width=0.96\textwidth,height=5.2cm,ybar stacked,bar width=10pt,
ymin=0,ymax=1.26,ylabel={Time / Linear total},
title={Temperature 0: GSM8K + MATH-500, cap 1,024},title style={font=\small\bfseries,yshift=22pt},
symbolic x coords={3B-L,3B-Q,3B-P,8B-L,8B-Q,8B-P},xtick=data,
xticklabels={Linear,Quadratic,Packed,Linear,Quadratic,Packed},
tick label style={font=\scriptsize},label style={font=\small},
legend style={at={(0.5,1.02)},anchor=south,legend columns=2,draw=none,font=\small},
axis line style={draw=gray!65},ymajorgrids=true,grid style={draw=gray!20},clip=false]
\addplot+[fill=linearblue!75,draw=gray!65] coordinates {
(3B-L,0.92727144)
(3B-Q,0.89817353)
(3B-P,0.85950845)
(8B-L,0.93088388)
(8B-Q,0.87583408)
(8B-P,0.85383016)
};
\addlegendentry{Model execution time}
\addplot+[fill=gray!45,draw=gray!65] coordinates {
(3B-L,0.07272856)
(3B-Q,0.20754559)
(3B-P,0.10435715)
(8B-L,0.06911612)
(8B-Q,0.18608035)
(8B-P,0.09712969)
};
\addlegendentry{Non-model decoder time}
\draw[black,dashed,thick] (axis cs:3B-L,1)--(axis cs:8B-P,1);
\node[font=\scriptsize] at (axis cs:3B-Q,1.18) {3B};
\node[font=\scriptsize] at (axis cs:8B-Q,1.18) {8B};
\end{axis}
\end{tikzpicture}

%% file: tables/nemotron_absolute_greedy.tex
\begin{table}[H]
\caption{\textbf{Nemotron absolute measurements at temperature 0.} Tokens, wall time, and NFE are means per response; rates use aggregate totals. Bold marks the highest tokens/s within each model and task.}
\label{tab:greedy-length-details}
\label{tab:nemotron}
\centering\footnotesize
\setlength{\tabcolsep}{4pt}
\begin{tabular}{@{}llrrrrrrr@{}}
\toprule
Task & Method & Tokens/s & \shortstack{Tokens/\\NFE} & \shortstack{Time/NFE\\(ms)} & \shortstack{Accuracy\\(\%)} & Tokens & \shortstack{Wall\\(s)} & NFE\\
\midrule
\multicolumn{9}{l}{\textit{Nemotron 3B}}\\
MBPP & \linear{} & 173.46 & 3.757 & 21.658 & 74.87 & 65.8 & 0.379 & 17.50\\
 & \quadratic{} & 164.38 & 4.288 & 26.089 & 74.34 & 65.9 & 0.401 & 15.36\\
 & \method{} & \textbf{181.10} & 4.312 & 23.812 & 74.60 & 67.0 & 0.370 & 15.53\\
\addlinespace[2pt]
HumanEval & \linear{} & 237.45 & 5.158 & 21.722 & 79.27 & 90.5 & 0.381 & 17.55\\
 & \quadratic{} & 211.48 & 5.536 & 26.177 & 79.88 & 91.1 & 0.431 & 16.46\\
 & \method{} & \textbf{249.75} & 5.971 & 23.908 & 78.05 & 90.0 & 0.361 & 15.08\\
\addlinespace[2pt]
GSM8K & \linear{} & 190.05 & 4.142 & 21.795 & 88.40 & 177.1 & 0.932 & 42.76\\
 & \quadratic{} & 180.93 & 4.750 & 26.253 & 88.25 & 178.9 & 0.989 & 37.66\\
 & \method{} & \textbf{201.55} & 4.812 & 23.873 & 88.10 & 179.1 & 0.888 & 37.22\\
\addlinespace[2pt]
MATH-500 & \linear{} & 222.40 & 4.928 & 22.160 & 55.40 & 489.0 & 2.199 & 99.23\\
 & \quadratic{} & 193.10 & 5.229 & 27.081 & 58.20 & 488.5 & 2.530 & 93.41\\
 & \method{} & \textbf{230.08} & 5.586 & 24.280 & 58.40 & 492.9 & 2.142 & 88.24\\
\addlinespace[2pt]
WritingBench & \linear{} & 84.56 & 1.864 & 22.038 & --- & 502.1 & 5.938 & 269.43\\
 & \quadratic{} & 89.39 & 2.468 & 27.613 & --- & 502.0 & 5.616 & 203.39\\
 & \method{} & \textbf{92.44} & 2.242 & 24.253 & --- & 502.9 & 5.441 & 224.34\\
\addlinespace[2pt]
\multicolumn{9}{l}{\textit{Nemotron 8B}}\\
MBPP & \linear{} & 151.96 & 4.421 & 29.095 & 85.45 & 56.9 & 0.375 & 12.87\\
 & \quadratic{} & 134.86 & 4.631 & 34.335 & 85.19 & 59.3 & 0.440 & 12.81\\
 & \method{} & \textbf{158.00} & 5.041 & 31.902 & 85.19 & 58.3 & 0.369 & 11.56\\
\addlinespace[2pt]
HumanEval & \linear{} & 195.71 & 5.695 & 29.099 & 80.49 & 94.2 & 0.481 & 16.54\\
 & \quadratic{} & 155.30 & 5.335 & 34.353 & 82.32 & 91.5 & 0.589 & 17.15\\
 & \method{} & \textbf{205.46} & 6.546 & 31.859 & 81.71 & 92.8 & 0.452 & 14.18\\
\addlinespace[2pt]
GSM8K & \linear{} & 127.39 & 3.744 & 29.389 & 90.52 & 313.2 & 2.459 & 83.66\\
 & \quadratic{} & 122.93 & 4.319 & 35.134 & 91.21 & 313.9 & 2.553 & 72.68\\
 & \method{} & \textbf{134.29} & 4.300 & 32.021 & 90.67 & 318.0 & 2.368 & 73.94\\
\addlinespace[2pt]
MATH-500 & \linear{} & 147.42 & 4.281 & 29.037 & 60.40 & 640.4 & 4.344 & 149.61\\
 & \quadratic{} & 133.78 & 4.698 & 35.116 & 60.20 & 630.7 & 4.715 & 134.26\\
 & \method{} & \textbf{155.61} & 4.922 & 31.632 & 61.80 & 634.7 & 4.079 & 128.94\\
\addlinespace[2pt]
WritingBench & \linear{} & 66.12 & 1.898 & 28.701 & --- & 501.7 & 7.587 & 264.35\\
 & \quadratic{} & 67.59 & 2.476 & 36.633 & --- & 501.5 & 7.420 & 202.55\\
 & \method{} & \textbf{71.46} & 2.255 & 31.559 & --- & 502.6 & 7.033 & 222.87\\
\addlinespace[2pt]
\bottomrule
\end{tabular}
\end{table}

%% file: tables/nemotron_absolute_sampling.tex
\begin{table}[H]
\caption{\textbf{Nemotron absolute measurements at temperature 1.} Tokens, wall time, and NFE are means per response; rates use aggregate totals. Bold marks the highest tokens/s within each model and task.}
\label{tab:rejection-length-details}
\label{tab:rejection}
\centering\footnotesize
\setlength{\tabcolsep}{4pt}
\begin{tabular}{@{}llrrrrrrr@{}}
\toprule
Task & Method & Tokens/s & \shortstack{Tokens/\\NFE} & \shortstack{Time/NFE\\(ms)} & \shortstack{Accuracy\\(\%)} & Tokens & \shortstack{Wall\\(s)} & NFE\\
\midrule
\multicolumn{9}{l}{\textit{Nemotron 3B}}\\
MBPP & \linear{} & 120.38 & 2.508 & 20.834 & 68.14 & 102.7 & 0.853 & 40.95\\
 & \quadratic{} & \textbf{129.01} & 3.328 & 25.800 & 69.69 & 108.7 & 0.843 & 32.67\\
 & \method{} & 125.06 & 2.933 & 23.454 & 68.78 & 101.8 & 0.814 & 34.72\\
\addlinespace[2pt]
HumanEval & \linear{} & 169.58 & 3.526 & 20.792 & 66.89 & 96.9 & 0.571 & 27.49\\
 & \quadratic{} & 170.63 & 4.348 & 25.484 & 67.29 & 95.6 & 0.561 & 22.00\\
 & \method{} & \textbf{173.20} & 4.048 & 23.374 & 68.35 & 97.7 & 0.564 & 24.12\\
\addlinespace[2pt]
GSM8K & \linear{} & 137.38 & 2.977 & 21.670 & 84.15 & 197.7 & 1.439 & 66.41\\
 & \quadratic{} & \textbf{140.65} & 3.776 & 26.843 & 84.84 & 199.2 & 1.416 & 52.76\\
 & \method{} & 140.07 & 3.428 & 24.471 & 83.47 & 198.4 & 1.416 & 57.88\\
\addlinespace[2pt]
MATH-500 & \linear{} & 150.13 & 3.266 & 21.753 & 53.80 & 518.2 & 3.451 & 158.66\\
 & \quadratic{} & 149.05 & 4.102 & 27.521 & 52.00 & 515.6 & 3.459 & 125.68\\
 & \method{} & \textbf{155.49} & 3.841 & 24.701 & 52.80 & 523.9 & 3.369 & 136.41\\
\addlinespace[2pt]
WritingBench & \linear{} & 74.96 & 1.575 & 21.012 & --- & 498.8 & 6.654 & 316.68\\
 & \quadratic{} & \textbf{78.20} & 2.141 & 27.382 & --- & 497.0 & 6.355 & 232.09\\
 & \method{} & 78.04 & 1.890 & 24.222 & --- & 500.1 & 6.409 & 264.58\\
\addlinespace[2pt]
\multicolumn{9}{l}{\textit{Nemotron 8B}}\\
MBPP & \linear{} & 84.45 & 2.316 & 27.426 & 76.76 & 129.9 & 1.538 & 56.08\\
 & \quadratic{} & \textbf{91.22} & 3.062 & 33.572 & 74.98 & 130.4 & 1.429 & 42.58\\
 & \method{} & 89.56 & 2.752 & 30.722 & 75.36 & 133.6 & 1.491 & 48.54\\
\addlinespace[2pt]
HumanEval & \linear{} & 107.78 & 2.944 & 27.316 & 70.70 & 120.3 & 1.116 & 40.87\\
 & \quadratic{} & 111.48 & 3.714 & 33.317 & 71.28 & 120.6 & 1.082 & 32.48\\
 & \method{} & \textbf{114.17} & 3.490 & 30.567 & 71.34 & 118.7 & 1.040 & 34.01\\
\addlinespace[2pt]
GSM8K & \linear{} & 85.66 & 2.341 & 27.324 & 87.34 & 351.0 & 4.098 & 149.97\\
 & \quadratic{} & \textbf{91.86} & 3.109 & 33.840 & 86.50 & 348.1 & 3.790 & 111.99\\
 & \method{} & 91.08 & 2.794 & 30.678 & 85.29 & 345.0 & 3.787 & 123.46\\
\addlinespace[2pt]
MATH-500 & \linear{} & 90.91 & 2.576 & 28.338 & 54.20 & 657.3 & 7.230 & 255.14\\
 & \quadratic{} & \textbf{95.73} & 3.391 & 35.417 & 55.20 & 658.5 & 6.879 & 194.22\\
 & \method{} & 95.47 & 3.047 & 31.919 & 54.20 & 664.6 & 6.962 & 218.10\\
\addlinespace[2pt]
WritingBench & \linear{} & 52.63 & 1.433 & 27.225 & --- & 500.9 & 9.518 & 349.59\\
 & \quadratic{} & 53.34 & 1.940 & 36.370 & --- & 496.5 & 9.308 & 255.94\\
 & \method{} & \textbf{55.39} & 1.741 & 31.424 & --- & 498.8 & 9.006 & 286.58\\
\addlinespace[2pt]
\bottomrule
\end{tabular}
\end{table}

%% file: tables/setdlm_absolute_ltr.tex
\begin{table}[H]
\caption{\textbf{SetDLM absolute measurements on GSM8K.} LTR verification and native decoding on all 1,319 test questions. Tokens, wall time, and NFE are means per response; rates use aggregate totals. Bold marks the highest tokens/s within each window.}
\label{tab:setdlm-length-details}
\centering\footnotesize
\setlength{\tabcolsep}{4pt}
\begin{tabular}{@{}llrrrrrrr@{}}
\toprule
Window & Method & Tokens/s & \shortstack{Tokens/\\NFE} & \shortstack{Time/NFE\\(ms)} & \shortstack{Accuracy\\(\%)} & Tokens & \shortstack{Wall\\(s)} & NFE\\
\midrule
$W=8$ & \linear{} & 94.93 & 2.610 & 27.496 & 68.39 & 257.2 & 2.709 & 98.53\\
 & \quadratic{} & 86.25 & 3.414 & 39.583 & 68.39 & 257.1 & 2.981 & 75.31\\
 & \method{} & \textbf{100.60} & 2.842 & 28.247 & 68.39 & 257.2 & 2.556 & 90.49\\
 & Native & 77.21 & 2.119 & 27.442 & 59.67 & 252.4 & 3.269 & 119.11\\
\addlinespace[3pt]
$W=16$ & \linear{} & 98.85 & 2.792 & 28.243 & 66.57 & 259.9 & 2.629 & 93.09\\
 & \quadratic{} & 41.60 & 4.074 & 97.934 & 66.57 & 260.4 & 6.260 & 63.92\\
 & \method{} & \textbf{100.30} & 3.030 & 30.209 & 66.57 & 259.9 & 2.591 & 85.77\\
 & Native & 80.59 & 2.220 & 27.552 & 59.51 & 253.6 & 3.147 & 114.21\\
\addlinespace[3pt]
\bottomrule
\end{tabular}
\end{table}

%% file: tables/setdlm_absolute.tex
\begin{table}[H]
\caption{\textbf{SetDLM measurements on GSM8K.} All 1,319 test questions. Top: LTR verification and native decoding; bottom: confidence-ordered verification. Tokens, inclusive wall time, and total NFE are means per response; rates use aggregate totals. Bold marks the highest tokens/s within each window and order.}
\label{tab:setdlm-order}
\centering\footnotesize
\setlength{\tabcolsep}{4pt}
\begin{tabular}{@{}llrrrrrrr@{}}
\toprule
Window & Method & Tokens/s & \shortstack{Tokens/\\NFE} & \shortstack{Time/NFE\\(ms)} & \shortstack{Accuracy\\(\%)} & Tokens & \shortstack{Wall\\(s)} & NFE\\
\midrule
\multicolumn{9}{l}{\textit{LTR verification and native decoding}}\\
$W=8$ & \linear{} & 94.93 & 2.610 & 27.496 & 68.39 & 257.2 & 2.709 & 98.53\\
 & \quadratic{} & 86.25 & 3.414 & 39.583 & 68.39 & 257.1 & 2.981 & 75.31\\
 & \method{} & \textbf{100.60} & 2.842 & 28.247 & 68.39 & 257.2 & 2.556 & 90.49\\
 & Native & 77.21 & 2.119 & 27.442 & 59.67 & 252.4 & 3.269 & 119.11\\
\addlinespace[3pt]
$W=16$ & \linear{} & 98.85 & 2.792 & 28.243 & 66.57 & 259.9 & 2.629 & 93.09\\
 & \quadratic{} & 41.60 & 4.074 & 97.934 & 66.57 & 260.4 & 6.260 & 63.92\\
 & \method{} & \textbf{100.30} & 3.030 & 30.209 & 66.57 & 259.9 & 2.591 & 85.77\\
 & Native & 80.59 & 2.220 & 27.552 & 59.51 & 253.6 & 3.147 & 114.21\\
\addlinespace[3pt]
\multicolumn{9}{l}{\textit{Confidence-ordered verification}}\\
$W=8$ & \linear{} & 93.93 & 2.586 & 27.535 & 65.05 & 254.5 & 2.709 & 98.39\\
 & \quadratic{} & 90.83 & 3.470 & 38.200 & 65.58 & 254.6 & 2.803 & 73.38\\
 & \method{} & \textbf{103.52} & 2.943 & 28.431 & 65.35 & 253.8 & 2.451 & 86.21\\
\addlinespace[3pt]
$W=16$ & \linear{} & 99.06 & 2.794 & 28.206 & 53.07 & 247.0 & 2.493 & 88.40\\
 & \quadratic{} & 50.90 & 4.462 & 87.664 & 60.80 & 252.1 & 4.954 & 56.51\\
 & \method{} & \textbf{111.76} & 3.408 & 30.491 & 59.06 & 251.5 & 2.251 & 73.81\\
\addlinespace[3pt]
\bottomrule
\end{tabular}
\par\smallskip
\begin{minipage}{\linewidth}\footnotesize
The training horizons are \(S_{\max}=16\) and \(32\) for \(W=8\) and \(16\), respectively. For \(W=8\) and \(16\), respectively, maximum recurrent rows/call are \((16,32)\) for \linear{}, \((81,289)\) for \quadratic{}, \((25,49)\) for \method{}, and \((16,28)\) for Native. Native commits at confidence threshold \(0.9\) and has no verification order (\cref{app:native-baseline}).
\end{minipage}
\end{table}

%% file: tables/nemotron_score_comparisons.tex
\begin{table}[H]
\caption{\textbf{Paired Nemotron accuracy differences.}}
\label{tab:score-comparisons}
\centering\footnotesize
\setlength{\tabcolsep}{3pt}
\begin{tabular}{@{}lccc@{}}
\toprule
Setting & \shortstack{\method{}\\$-$ \linear{}} & \shortstack{\method{}\\$-$ \quadratic{}} & \shortstack{\linear{}\\$-$ \quadratic{}}\\
\midrule
\multicolumn{4}{l}{\textit{Temperature 0}}\\
3B MBPP & -0.26 [-1.32, +0.79] & +0.26 [-0.79, +1.32] & +0.53 [-0.53, +1.59]\\
\addlinespace[2pt]
3B HumanEval & -1.22 [-3.66, +1.22] & -1.83 [-4.27, +0.00] & -0.61 [-1.83, +0.00]\\
\addlinespace[2pt]
3B GSM8K & -0.30 [-1.14, +0.53] & -0.15 [-0.99, +0.68] & +0.15 [-0.68, +0.99]\\
\addlinespace[2pt]
3B MATH-500 & +3.00 [+0.80, +5.20] & +0.20 [-1.60, +2.00] & -2.80 [-4.60, -1.00]\\
\addlinespace[2pt]
8B MBPP & -0.26 [-1.59, +0.79] & +0.00 [-0.79, +0.79] & +0.26 [-0.53, +1.06]\\
\addlinespace[2pt]
8B HumanEval & +1.22 [-1.22, +3.66] & -0.61 [-3.05, +1.22] & -1.83 [-4.27, +0.00]\\
\addlinespace[2pt]
8B GSM8K & +0.15 [-0.83, +1.14] & -0.53 [-1.59, +0.53] & -0.68 [-1.82, +0.38]\\
\addlinespace[2pt]
8B MATH-500 & +1.40 [-0.60, +3.60] & +1.60 [-0.40, +3.80] & +0.20 [-2.00, +2.40]\\
\addlinespace[2pt]
\multicolumn{4}{l}{\textit{Temperature 1}}\\
3B MBPP & +0.64 [-0.98, +2.31] & -0.91 [-2.61, +0.83] & -1.55 [-3.17, +0.08]\\
\addlinespace[2pt]
3B HumanEval & +1.46 [+0.06, +2.90] & +1.07 [-0.34, +2.47] & -0.40 [-1.74, +0.92]\\
\addlinespace[2pt]
3B GSM8K & -0.68 [-2.65, +1.36] & -1.36 [-3.34, +0.61] & -0.68 [-2.73, +1.36]\\
\addlinespace[2pt]
3B MATH-500 & -1.00 [-4.60, +2.60] & +0.80 [-2.80, +4.40] & +1.80 [-1.80, +5.40]\\
\addlinespace[2pt]
8B MBPP & -1.40 [-2.95, +0.15] & +0.38 [-1.32, +2.12] & +1.78 [+0.11, +3.44]\\
\addlinespace[2pt]
8B HumanEval & +0.64 [-0.91, +2.20] & +0.06 [-1.49, +1.59] & -0.58 [-2.13, +0.95]\\
\addlinespace[2pt]
8B GSM8K & -2.05 [-4.02, -0.08] & -1.21 [-3.26, +0.83] & +0.83 [-1.06, +2.73]\\
\addlinespace[2pt]
8B MATH-500 & +0.00 [-4.00, +4.00] & -1.00 [-4.60, +2.80] & -1.00 [-4.80, +2.80]\\
\addlinespace[2pt]
\bottomrule
\end{tabular}
\end{table}

%% file: tables/nemotron_pairwise_greedy.tex
\begin{table}[H]
\caption{\textbf{Paired Nemotron efficiency comparisons at temperature 0.}}
\label{tab:paired-ci}
\centering\footnotesize
\setlength{\tabcolsep}{3pt}
\begin{tabular}{@{}llrr@{}}
\toprule
Setting & Baseline & $\Delta$ wall (s) [95\% CI] & Throughput ratio [95\% CI]\\
\midrule
3B MBPP & \linear{} & -0.0092 [-0.0268, +0.0123] & 1.044 [1.007, 1.075]\\
 & \quadratic{} & -0.0308 [-0.0563, -0.0027] & 1.102 [1.079, 1.127]\\
3B HumanEval & \linear{} & -0.0207 [-0.0341, -0.0100] & 1.052 [1.033, 1.072]\\
 & \quadratic{} & -0.0703 [-0.0886, -0.0539] & 1.181 [1.149, 1.213]\\
3B GSM8K & \linear{} & -0.0434 [-0.0605, -0.0243] & 1.061 [1.049, 1.071]\\
 & \quadratic{} & -0.1003 [-0.1187, -0.0798] & 1.114 [1.103, 1.125]\\
3B MATH-500 & \linear{} & -0.0565 [-0.1145, +0.0040] & 1.035 [1.021, 1.048]\\
 & \quadratic{} & -0.3872 [-0.4531, -0.3221] & 1.192 [1.174, 1.209]\\
3B WritingBench & \linear{} & -0.4969 [-0.5335, -0.4604] & 1.093 [1.087, 1.099]\\
 & \quadratic{} & -0.1753 [-0.2068, -0.1435] & 1.034 [1.029, 1.040]\\
8B MBPP & \linear{} & -0.0058 [-0.0222, +0.0156] & 1.040 [1.024, 1.056]\\
 & \quadratic{} & -0.0711 [-0.0921, -0.0520] & 1.172 [1.144, 1.200]\\
8B HumanEval & \linear{} & -0.0294 [-0.0509, -0.0077] & 1.050 [1.022, 1.078]\\
 & \quadratic{} & -0.1372 [-0.1634, -0.1121] & 1.323 [1.274, 1.376]\\
8B GSM8K & \linear{} & -0.0908 [-0.1499, -0.0318] & 1.054 [1.042, 1.066]\\
 & \quadratic{} & -0.1856 [-0.2433, -0.1277] & 1.092 [1.082, 1.104]\\
8B MATH-500 & \linear{} & -0.2656 [-0.3560, -0.1778] & 1.056 [1.040, 1.071]\\
 & \quadratic{} & -0.6361 [-0.7332, -0.5367] & 1.163 [1.145, 1.181]\\
8B WritingBench & \linear{} & -0.5538 [-0.6101, -0.4958] & 1.081 [1.073, 1.089]\\
 & \quadratic{} & -0.3866 [-0.4433, -0.3300] & 1.057 [1.050, 1.065]\\
\bottomrule
\end{tabular}
\end{table}

%% file: tables/nemotron_pairwise_sampling.tex
\begin{table}[H]
\caption{\textbf{Paired Nemotron efficiency comparisons at temperature 1.}}
\label{tab:sampling-ci}
\centering\footnotesize
\setlength{\tabcolsep}{3pt}
\begin{tabular}{@{}llrr@{}}
\toprule
Setting & Baseline & $\Delta$ wall (s) [95\% CI] & Throughput ratio [95\% CI]\\
\midrule
3B MBPP & \linear{} & -0.0389 [-0.0678, -0.0107] & 1.039 [1.026, 1.052]\\
 & \quadratic{} & -0.0287 [-0.0832, +0.0248] & 0.969 [0.956, 0.984]\\
3B HumanEval & \linear{} & -0.0076 [-0.0250, +0.0093] & 1.021 [1.005, 1.038]\\
 & \quadratic{} & +0.0033 [-0.0121, +0.0189] & 1.015 [0.997, 1.034]\\
3B GSM8K & \linear{} & -0.0227 [-0.0832, +0.0387] & 1.020 [1.001, 1.037]\\
 & \quadratic{} & +0.0000 [-0.0640, +0.0652] & 0.996 [0.979, 1.013]\\
3B MATH-500 & \linear{} & -0.0820 [-0.2440, +0.0791] & 1.036 [1.012, 1.060]\\
 & \quadratic{} & -0.0895 [-0.2481, +0.0661] & 1.043 [1.021, 1.066]\\
3B WritingBench & \linear{} & -0.2455 [-0.3181, -0.1699] & 1.041 [1.034, 1.048]\\
 & \quadratic{} & +0.0536 [-0.0187, +0.1283] & 0.998 [0.990, 1.006]\\
8B MBPP & \linear{} & -0.0467 [-0.0970, +0.0024] & 1.061 [1.045, 1.076]\\
 & \quadratic{} & +0.0619 [+0.0072, +0.1164] & 0.982 [0.969, 0.995]\\
8B HumanEval & \linear{} & -0.0767 [-0.1104, -0.0443] & 1.059 [1.042, 1.077]\\
 & \quadratic{} & -0.0423 [-0.0786, -0.0059] & 1.024 [1.007, 1.042]\\
8B GSM8K & \linear{} & -0.3104 [-0.4833, -0.1363] & 1.063 [1.046, 1.081]\\
 & \quadratic{} & -0.0025 [-0.1731, +0.1618] & 0.992 [0.976, 1.007]\\
8B MATH-500 & \linear{} & -0.2686 [-0.5450, +0.0071] & 1.050 [1.029, 1.073]\\
 & \quadratic{} & +0.0831 [-0.1779, +0.3394] & 0.997 [0.978, 1.018]\\
8B WritingBench & \linear{} & -0.5121 [-0.6381, -0.3880] & 1.052 [1.043, 1.061]\\
 & \quadratic{} & -0.3029 [-0.4490, -0.1578] & 1.039 [1.030, 1.047]\\
\bottomrule
\end{tabular}
\end{table}